\def\eqref#1{equation~\ref{#1}}
\def\floor#1{\lfloor #1 \rfloor}
\def\1{\bm{1}}
\DeclareMathAlphabet{\mathsfit}{\encodingdefault}{\sfdefault}{m}{sl}
\SetMathAlphabet{\mathsfit}{bold}{\encodingdefault}{\sfdefault}{bx}{n}
\newtheorem{theorem}{Theorem}[section]
\newtheorem{lemma}[theorem]{Lemma}
\icmltitlerunning{Architecture Compression}
\begin{document}

\twocolumn[
\icmltitle{Architecture Compression}

% It is OKAY to include author information, even for blind
% submissions: the style file will automatically remove it for you
% unless you've provided the [accepted] option to the icml2019
% package.

% List of affiliations: The first argument should be a (short)
% identifier you will use later to specify author affiliations
% Academic affiliations should list Department, University, City, Region, Country
% Industry affiliations should list Company, City, Region, Country

% You can specify symbols, otherwise they are numbered in order.
% Ideally, you should not use this facility. Affiliations will be numbered
% in order of appearance and this is the preferred way.
\icmlsetsymbol{equal}{*}

\begin{icmlauthorlist}
\icmlauthor{Anubhav Ashok}{cmu}
\end{icmlauthorlist}

\icmlaffiliation{cmu}{Carnegie Mellon University Pittsburgh, PA 15213 anubhava@alumni.cmu.edu}

\icmlcorrespondingauthor{Anubhav Ashok}{cmu}

% You may provide any keywords that you
% find helpful for describing your paper; these are used to populate
% the "keywords" metadata in the PDF but will not be shown in the document
\icmlkeywords{Machine Learning, ICML}

\vskip 0.3in
]

% this must go after the closing bracket ] following \twocolumn[ ...

% This command actually creates the footnote in the first column
% listing the affiliations and the copyright notice.
% The command takes one argument, which is text to display at the start of the footnote.
% The \icmlEqualContribution command is standard text for equal contribution.
% Remove it (just {}) if you do not need this facility.

%\printAffiliationsAndNotice{}  % leave blank if no need to mention equal contribution
\printAffiliationsAndNotice{} % otherwise use the standard text.

\begin{abstract}
In this paper we propose a novel approach to model compression termed Architecture Compression. Instead of operating on the weight or filter space of the network like classical model compression methods, our approach operates on the architecture space. A 1-D CNN encoder/decoder is trained to learn a mapping from discrete architecture space to a continuous embedding and back. Additionally, this embedding is jointly trained to regress accuracy and parameter count in order to incorporate information about the architecture's effectiveness on the dataset. During the compression phase, we first encode the network and then perform gradient descent in continuous space to optimize a compression objective function that maximizes accuracy and minimizes parameter count. The final continuous feature is then mapped to a discrete architecture using the decoder. We demonstrate the merits of this approach on visual recognition tasks such as CIFAR-10/100, Fashion-MNIST and SVHN and achieve a greater than 20x compression on CIFAR-10.% and ImageNet.
\end{abstract}

\section{Introduction}

The recent adoption of CNNs for real-time, on-device applications has fueled a great demand for better model compression. Conventional methods such as quantization, pruning and distillation have proven to be reliable approaches in reducing redundancies in networks. However, one avenue where there has been little progress is that of architecture space based compression. 

Approaches such as \cite{iandola2016squeezenet}, \cite{howard2017mobilenets}, \cite{rastegari2016xnor}, have introduced hand defined heuristics to design networks that are efficient without sacrificing accuracy. However, designing such networks still remains a laborious task, requiring much human expertise. In such a context, an efficient dataset-driven approach to determine the optimal architecture is desirable.

Recent methods such as \cite{ashok2018n2n, he2018amc, zhou2018resource, tan2018mnasnet} have proposed dataset-driven architecture based model compression using reinforcement learning. However, one drawback of such RL based methods is that they are less sample-efficient, often requiring a large number of architecture evaluations and several hours or days of training to find a single compressed model. Furthermore, it is unclear how these approaches compare to random search and whether they only succeed due to large-scale exploration and evaluation of many architectures. 

To the best of our knowledge, this is the first paper to introduce a novel gradient descent based approach to perform architecture compression. To facilitate learning, we describe a mapping from discrete architecture space to a continuous space that encodes the structure of architecture for a specific dataset. We train a one-dimensional convolutional encoder/decoder neural network to learn the structure of discrete architecture space while using accuracy and parameter regressors to impose the structure inherent to learning the dataset. For compression, we encode an input architecture and leverage this learned continuous latent space to perform gradient descent on the encoded feature. The augmented feature is then decoded into a compressed discrete architecture.

We demonstrate the effectiveness of our approach on several visual learning tasks of varying difficulty (Fashion-MNIST, SVHN, CIFAR-10, CIFAR-100) and show 5-20x compression on architectures. We also compare this approach to conventional compression methods such as pruning, distillation and reinforcement learning and show that our architectures are smaller and faster than those produced by the baseline methods.
\begin{figure*}[!tbh]
    \centering
    \includegraphics[scale=0.4]{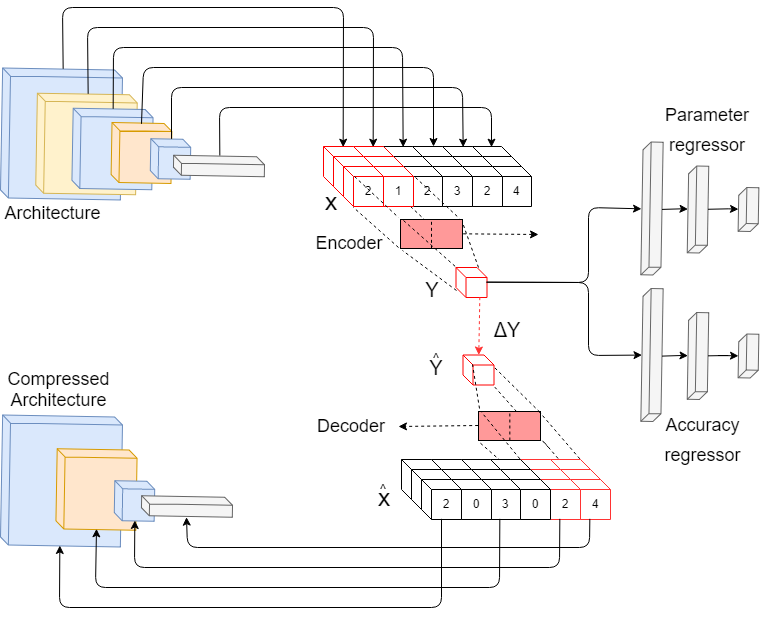}
    \caption{High level overview of 1D-CNN encoder-decoder system}
    \label{fig:system}
\end{figure*}
\section{Related works}
\subsection{Architecture Search}
There have been recent works in architecture search such as \cite{zoph2016neural}, \cite{baker2016designing}, \cite{miikkulainen2017evolving}, \cite{real2017large}, that use reinforcement learning or evolutionary algorithms to traverse the space of architectures. These approaches characterize architectures as discrete entities in a non-differentiable space and use either a constrained state space or heuristics to discover architectures. These approaches have been subject to criticism from the machine learning community for using large amounts of computation. 

Work such as \cite{ashok2018n2n}, \cite{tan2018mnasnet}, \cite{zhou2018resource}, \cite{he2018amc} attempt to search for \textit{efficient} architectures instead of just the best. However these approaches also currently rely on reinforcement learning. In contrast to these approaches, our method shows that we can leverage classical optimization techniques such as gradient descent to effectively search the space of architectures for a constrained objective.
Methods such as \cite{liu2018darts} and \cite{luo2018neural} attempt to make the space of architectures differentiable however this work is focused on discovering repeatable convolutional cells instead of optimizing the architecture as a whole. 
\subsection{Pruning, Quantization, Hand designed models}
 Pruning-based methods preserve the weights that matter most and remove the redundant weights \cite{lecun1989optimal}, \cite{hassibi1993optimal}, \cite{srinivas2015data}, \cite{han2015learning}, \cite{han2015deep}, \cite{mariet2015diversity}, \cite{anwar2015structured}, \cite{guo2016dynamic}. While pruning-based approaches typically operate on weight space, our approach operates on the the model architecture. Additionally, our method offers greater flexibility as we can use memory, inference time, power, and other hardware constraints to guide the compression process, resulting in the optimal architecture for a given dataset and constraints.
 
 Hand designed models such as \cite{iandola2016squeezenet} and \cite{howard2017mobilenets} have been shown to work well in practical applications such as self-driving cars and mobile phones. These are good stepping stones in making progress in designing efficient architectures. Our work focuses on designing a general method to design such architectures in a fully automated data-driven manner.

\section{Structure of architecture compression space}
In this section we show how to map certain classes of architectures to a unique vector representation in order to enable learning. We then analyze the cardinality of architecture space for compression and prove that this space has several desirable properties for learning a mapping. 
%Since we intend to generate a discrete architecture from a continuous representation i.e. a mapping from a lower cardinality space to a larger cardinality space, it is desirable that the mapping be injective. However, we prove that 
%Observe that since the encoder can at most be an injective function since the cardinality of the continuous space is larger than the discrete input space.

\subsection{Topological representation of architectures}
\label{sec:topological_ordering}
We observe that most modern neural network architectures $A \in \mathbb{A}$, with the exception of recurrent neural networks can be expressed as a directed acyclic graph (DAG). We use the property that every DAG has a topological ordering (Proof in section \ref{sec:dagproof}) to represent an architecture as a partially ordered sequence of layers, $l_i \in \mathbb{L}$. Formally, \begin{equation} l_{1:N} = [l_0, l_1...l_N] \text{ such that } \forall i, j \in [1, N], i > j, \nexists l_i \rightarrow l_j \end{equation} where the $\rightarrow$ operator denotes that the activation of $l_i$ is the input of $l_j$. Furthermore, this topological ordering is unique (Proof in section \ref{sec:uniquetopo}) for standard convolutional networks and those with simple skip connections (e.g. ResNet). We will denote this ordered representation of an architecture as 
\begin{equation} o: \mathbb{A} \rightarrow \mathbb{L}^{N}: A \mapsto l_{1:N} \end{equation} where $o$ is the topological sorting function.
Note also that for unique, one-to-one mappings, there exists an inverse function $o^{-1}$ to recover $A$ given $l_{1:N}$
\begin{equation} o^{-1}: \mathbb{L}^{N} \rightarrow \mathbb{A}: o(A) \mapsto A \end{equation}

\subsection{Cardinality of architecture compression space} \label{sec:setup}
% Here we want to make the claim that for each architecutre, there exists a distribution of accuracies based on theta
% Then show that the expectation can be approximated using a monte-carlo approximation
% This shows that the mapping is injective one way
Let $A \in \mathbb{A}$ be an architecture that is parameterized by $\theta$ and trained on some dataset $D$. Then we can model the accuracy $a \in [0, 1]$ as a continuous random variable stochastically sampled from a distribution $P_{\theta}(a|A, D)$, since $\theta$ is typically optimized using a variant of stochastic gradient descent. 

The expected accuracy $E_\theta[a|A,D]$, while not exactly computable due to the large dimension of $\theta$, can be approximated by training and evaluating the network under varying intializations $\theta_0$, i.e. \begin{equation} E_\theta[a|A,D] \approx \frac{1}{N} \sum_{i}^N g_\theta(A, D, \theta_i) \end{equation} where $g$ minimizes an objective function over $D$ with respect to $\theta$. Thus, we are interested in learning a mapping from discrete architecture space to the expected accuracy \begin{equation} f: \mathbb{A} \rightarrow [0, 1] \subset \mathbb{R}: A \mapsto E_\theta(a|A, D) \end{equation}

However, since we are interested in \textit{model compression}, we also want to construct the inverse mapping from the expected accuracy to discrete architecture space for a given parameter count $p \in \mathbb{R}^+$ \begin{equation} f^{-1}_p: \mathbb{R}^+ \times ([0, 1] \subset \mathbb{R}) \rightarrow \mathbb{A}: E_\theta(a|A, D) \mapsto A \end{equation} In the following subsections, we prove that such a mapping, while not unique, is feasible to learn since the range of valid discrete architectures is finite.

% Now we show that for a given parameter constarint, each expected accuracy maps to a finite set of architectures
% Show rest of properties
% Say that this extends to variable lenght sequences
% Show that we can further reudce search space by looking at the subset of architectures less than some lenght and note that this is likely feasible based on prior work
% 

For the following sections, we will denote parametric layers i.e. convolutional layers, batch normalization and fully connected layers as $f_\theta(x)$, where $x$ is the activation of the previous layer and $\theta$ represents the parameters of the layer. %Skip connections are written as $s_j$, where $j$ is the index in the topologically sorted list corresponding to the final layer in the skip-block. 

Non-parametric pooling layers as max pooling and average pooling layers are written as $q_s(x)$, where $s > 1$ (no upsampling) is the stride of the layer. Lastly activations such as sigmoid, ReLU, softmax etc. are represented as $\sigma(x)$.

We also make the following practical assumptions in the following proofs:
\begin{enumerate}
    \item The input $X \in \mathbb{R}^{H \times W \times C}$, where $ H, W, C \in \mathbb{N}^+$ and $H, W, C$ are fixed.
    \item For layers $l_i \in A$, if $l_i \in \Sigma$, $l_{i+1} \notin \Sigma$, where $\Sigma$ is the set of possible activation functions.
    \item Pooling layers reduce dimensions of input image by some discrete number in each spatial dimension (no padding).
\end{enumerate}

%Note, we also do not consider architectures that increase spatial dimension by incorporating layers such as fractional stride or upsampling. Furthermore we assume for simplicity that inputs and layer hyperparameters are symmetric in the spatial dimension, noting that the theorem still holds true otherwise.

\label{lem:pool}
\begin{lemma}
For a spatial input of dimension $d_0$, the number of pooling layers possible in a valid architecture is bounded by a function of the input dimension.
\end{lemma}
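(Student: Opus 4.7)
The plan is to show that each pooling layer strictly shrinks the spatial dimension by at least a factor of two, and that a valid architecture must retain a positive spatial dimension at every layer; iterating this constraint yields a logarithmic bound in $d_0$.

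First I would set up notation: let the spatial input at layer $i$ have dimension $d_i$, and let $q_{s_j}$ denote the $j$-th pooling layer (in topological order) with stride $s_j > 1$. By assumption~3, a pooling layer with stride $s$ reduces the spatial dimension from $d$ to $\lfloor d/s \rfloor$ (no padding). The remaining parametric layers $f_\theta$ and activations $\sigma$ do not change spatial dimension in standard CNNs; even if some $f_\theta$ (strided convolution) does shrink it, that only strengthens the bound. So after $k$ pooling layers applied to an input of dimension $d_0$, the output spatial dimension is at most
\begin{equation}
d_k \leq \left\lfloor \frac{d_0}{s_1 s_2 \cdots s_k} \right\rfloor.
\end{equation}

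Next, I would invoke the notion of a \emph{valid} architecture: the spatial dimension at every layer must be a positive integer, i.e.\ $d_k \geq 1$. Combined with the bound above, this forces $\prod_{j=1}^{k} s_j \leq d_0$. Because each $s_j$ is an integer strictly greater than $1$, we have $s_j \geq 2$, hence $2^k \leq \prod_{j=1}^k s_j \leq d_0$, which gives $k \leq \lfloor \log_2 d_0 \rfloor$. This is the desired bound as a function of the input dimension. For inputs with two spatial dimensions $(H, W)$, the same argument applied to each axis gives $k \leq \lfloor \log_2 \min(H, W) \rfloor$.

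The main obstacle, such as it is, is just being careful about edge cases in the assumptions: I must make sure that (i) pooling strides are bounded below by $2$ (given by $s > 1$ together with integrality), (ii) no assumption permits a layer to \emph{increase} spatial dimension (ruled out by ``no upsampling'' in the definition of $q_s$), and (iii) the validity condition $d_k \geq 1$ is the right notion (otherwise the subsequent layer would receive an empty tensor). All three are explicit in the assumptions preceding the lemma, so the argument is short once the setup is in place.
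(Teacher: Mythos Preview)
Your argument is correct and in fact yields a sharper bound than the paper's, but the route is genuinely different. The paper does not exploit the stride constraint $s\ge 2$ at all: it simply writes the unpadded pooling output as $d_{i+1}=\lfloor (d_i-k)/s\rfloor+1$, observes that this is strictly less than $d_i$, and concludes that each pooling layer drops the integer spatial dimension by at least one pixel. That immediately caps the number of pooling layers at $d_0$, a linear bound. Your approach instead leverages $s_j\ge 2$ to obtain a multiplicative shrinkage and hence the logarithmic bound $k\le\lfloor\log_2 d_0\rfloor$. The paper's version is more robust---it only needs the ``strict decrease'' content of assumption~3 and would go through even if stride~$1$ with kernel $>1$ were allowed---whereas yours trades that robustness for a tighter estimate.

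One small quibble worth fixing: the actual unpadded pooling formula is $\lfloor (d-k)/s\rfloor+1$, not $\lfloor d/s\rfloor$, so the clean product inequality $\prod_j s_j\le d_0$ does not literally drop out (e.g.\ $d=5$, $k=1$, $s=2$ gives output $3>\lfloor 5/2\rfloor$). Since the lemma only asks for \emph{some} bound in $d_0$, and the iterated map still collapses to $1$ in $O(\log d_0)$ steps under assumption~3's strict-decrease requirement, this is a cosmetic gap rather than a real one; but you should either adopt the paper's ``decrease by at least $1$'' step or tighten the recursion before stating an explicit $\log_2 d_0$ bound.
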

\begin{proof}
Let $x_i$ be the input of the ith layer with spatial dimension $d_i \in \mathbb{N}^+$. 
We observe that a pooling operation with stride $s \in \mathbb{N}^+ > 1$ and kernel size $k \in \mathbb{N}^+$ $x_{i+1} = q_s(x_i)$ results in an output of $d_{i+1} = \floor{\frac{d_i - k}{s}} + 1 < d_i$. Thus, the smallest possible dimensionality reduction we can have per layer is 1 pixel in each spatial dimension, setting $s=1$, $k=2$. We can see the number of such pooling layers is upperbounded by $d_0$
\end{proof}

\begin{theorem}
For a given parameter constraint $p$, the set of valid architectures that meet the parameter constraint $A_p = \{A | \text{ numParams}(A) \leq p\}$ is finite.
\end{theorem}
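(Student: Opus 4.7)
The plan is to leverage the topological representation $o(A) = l_{1:N}$ from Section 3.1 so that each architecture in $A_p$ corresponds to a finite sequence of layers, and then show both (i) the length $N$ of such a sequence is bounded above by a function of $p$ and $d_0$, and (ii) at each position in the sequence there are only finitely many admissible layer choices. Once both hold, $|A_p|$ is bounded by a finite product and the theorem follows.

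For step (i), I would partition the layers of any $A \in A_p$ into three disjoint classes following the notation just introduced: parametric layers $f_\theta$, pooling layers $q_s$, and activation layers $\sigma$. Every parametric layer contributes at least one trainable parameter (a single weight or bias), so the number of such layers is at most $p$. Lemma~\ref{lem:pool} already bounds the number of pooling layers by $d_0$. Finally, assumption (2) forbids two consecutive activations, so activation layers are separated by at least one non-activation layer, giving at most $(p + d_0) + 1$ activation layers. Summing, $N \le 2p + 2d_0 + 1$, which is finite.

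For step (ii), I would argue that each layer type draws its hyperparameters from a finite pool once $p$ and $d_0$ are fixed. Activations lie in the finite set $\Sigma$. A pooling layer is specified by stride $s$ and kernel $k$ with $1 < s \le d_0$ and $k \le d_0$ by assumption (3) together with Lemma~\ref{lem:pool}, so finitely many options. A parametric layer (convolution, batch norm, or fully connected) is specified by its output channel count, kernel size, and (for convolutions) stride; since the total parameter count is at most $p$, the output channel count and kernel size must each lie in $\{1,\dots,p\}$, and the kernel's spatial dimension cannot exceed the running spatial dimension, which itself is bounded by $d_0$. Hence each layer is drawn from a finite menu whose size depends only on $p$ and $d_0$.

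Combining, the number of sequences of length at most $N_{\max} = 2p + 2d_0 + 1$ over a finite alphabet of layer choices is finite, and by the uniqueness of $o^{-1}$ on the relevant class of architectures (Section~\ref{sec:uniquetopo}) each sequence corresponds to at most one architecture. Therefore $|A_p|$ is finite. The main obstacle I anticipate is the parametric-layer bookkeeping: one must verify that every architectural hyperparameter that is free to vary — including channel width, kernel size, and, for skip-connection variants, the allowable source indices of a residual link — admits an explicit bound in $p$ and $d_0$. Making this case-analysis exhaustive (rather than merely representative for plain CNNs) is the only place where the proof risks being incomplete, but under the stated assumptions each such hyperparameter is upper-bounded either by the parameter budget $p$ or by the spatial dimension $d_0$, closing the argument.
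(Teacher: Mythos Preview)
Your proposal is correct and follows essentially the same approach as the paper: bound the number of pooling layers via Lemma~\ref{lem:pool}, bound the number of parametric layers via the parameter budget, use assumption~(2) to control activation layers, and then observe that each position admits only finitely many layer choices. Your version is in fact more explicit than the paper's --- you produce a concrete length bound $N \le 2p + 2d_0 + 1$, whereas the paper argues qualitatively via an incremental ``partially constructed network'' framing that the length must be finite without writing down a bound.
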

\begin{proof}
Next, suppose we have a partially constructed network with parameter count $p_0$. Then by assumption 2, we can add at most one activation function. Furthermore by Lemma \ref{lem:pool}, we have a finite choice of pooling layers such that the architecture still remains valid. 

Suppose we add a parametric layer $l_i = f_{\theta i}$, the parameter constraint exceeded if $p_0 + |\theta_i| > p$. Thus, we have the upperbound $|\theta_i| \leq p - p_0$. Since $|\theta_i|$ is discrete, we have a finite number of choices of layers such that the partially constructed network is still valid.

We then observe that since $d_i$ monotonically decreases if we add a pooling layer and the upperbound of $|\theta_i|$ monotonically decreases if we add a parametric layer. 

Since at least every other layer has to be one of these types, the length of any valid architecture is finite.
Thus, we arrive at the conclusion that there are a finite number of valid architectures for a given parameter constraint.
%In the case of skip connections, we can observe that for a deep network of length $l$, we can have at most $2^{\frac{l(l-1)}{2}}$ possible configurations of skip connections.
\end{proof}

\section{Approach}
\begin{figure*}
    \centering
    \includegraphics[width=0.8\textwidth]{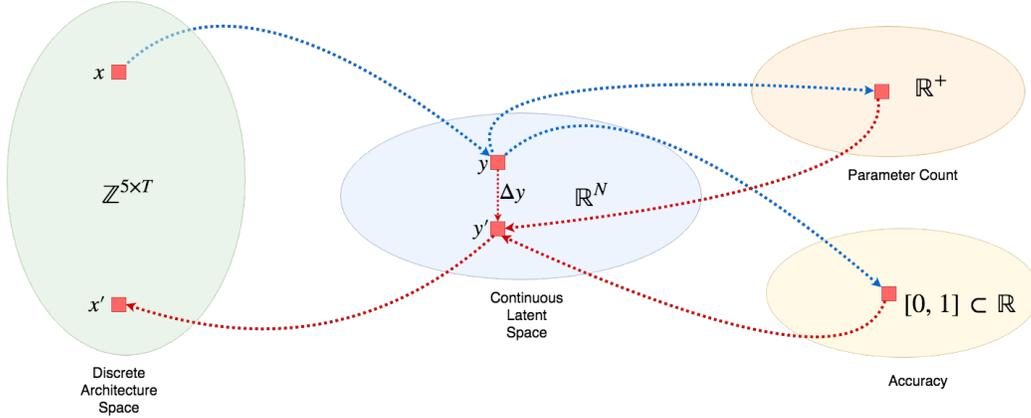}
    \caption{Structure of architecture compression spaces with forward (blue) and backward (red) mappings visualized}
    \label{fig:structure}
\end{figure*}

Having proved that a mapping is feasible, we now describe our approach to train a 1-dimensional convolutional encoder/decoder, an accuracy regressor and a parameter regressor to learn such a mapping (Outline in Fig. \ref{fig:system}).

The variable length input vector to the network denoted by $x_{1:T} = [x_i, ..., x_T] \in \mathbb{Z}^{5 \times T}$ is a topologically ordered representation of the architecture as per Section \ref{sec:topological_ordering}. Each layer is represented by the hyperparameters of each layer. Specifically, it is a discrete 5 dimensional vector as follows:
\begin{equation}
x_i = [t, k, o, s, p] \in \mathbb{Z}^5
\end{equation}
where $t$ is the layer type, $k$ the kernel size, $o$ the number of output channels, $s$ the stride and $p$ the padding. 

%In the experiments where layers have skip connections,

% TODO: Describe skip connections here

\subsection{Encoder and decoder}
The encoder is a one-dimensional convolutional neural network that takes as input the discrete architecture representation and outputs a continuous feature. Prior approaches \cite{zoph2016neural, zhou2018resource, tan2018mnasnet, he2018amc, ashok2018n2n} use recurrent networks such as LSTMs or bidirectional RNNs where backpropagating through large sequences can result in vanishing gradients and difficult credit assignment \cite{gradflow}. 

In our approach, we use a 1D-CNN encoder for several reasons. It can operate on variable length sequences, CNNs with sufficient depth have a large receptive field over the input sequence and it does not suffer from vanishing gradients since the length of gradient propagation is no longer a function of the sequence length but a function of the depth of the inference network.

The encoder $E$, computes the function:
\begin{equation}y = E(x_{1:T})\end{equation} where $x_{1:T} \in \mathbb{Z}^{5\times T}$ is a variable length sequence representing the architecture and $y \in \mathbb{R}^{N}$ a continuous feature embedding. Details of the network can be found in the appendix. 
% % TODO: Describe skip connection math here
% Our proposed network consists of 3 residual blocks consisting of conv-relu-bn layers with window size 3 and a pooling layer of window size and stride 2 after the first block. Each residual block computes the following function

The decoder has a similar residual 1D-CNN architecture as the encoder except that convolutions are replaced with transposed convolutions.

The decoder $D$, computes the function:
\begin{equation} x^*_{1:T} = D(y) \end{equation} where $x^*_{1:T} \in \mathbb{R}^{5\times T}$ is a continuous variable length sequence representing the architecture and $y \in \mathbb{R}^{N}$ a continuous feature embedding. The continuous variable $x^*_{1:T}$ is discretized to $\hat{x}_{1:T} \in \mathbb{Z}^{5\times T}$.

\subsection{Accuracy and parameter regressors}
The accuracy regressor and parameter regressor are two fully connected networks that take the continuous embedding of the architecture $y$ as input and predict the expected accuracy, $E_\theta[a]$ and parameter count, $p$ of the architecture $A$.

The accuracy regressor learns the function
\begin{equation}a = f_a(y)\end{equation} where $a \in [0, 1] \subset \mathbb{R}$ is trained to predict $a^* = E_\theta[a]$ (which is approximated as per \ref{sec:setup} by training the network with multiple intitializations).
This network consists of 3 fc-relu blocks with a sigmoid output activation to convert the output to $[0, 1]$.

The parameter regressor learns the function \begin{equation}p = f_p(y)\end{equation} where $p \in \mathbb{R}^+$ is trained to match the ground truth parameter count $p^*$. The ground truth parameter count is computed as $p^* = \frac{\sum_{i}^N |l_i| - \bar{p}}{\sigma_p}$ where $|l_i|$ is the number of parameters of each layer in the network. The mean parameter count $\bar{p}$ is subtracted to center the parameter count around 0 and the standard deviation of the parameter count $\sigma_p$, is used to scale the parameter count for better conditioning of the range of the output.
This network consists of 3 fc-relu blocks with a ReLU function to predict a non-negative real scalar. 

\subsection{Optimization}
In order to learn to encode/decode the architecture and predict accuracy/parameter count, we jointly optimize a multi-task loss function. While all four tasks are formulated as regression problems, each has a differing domain. To account for this, we choose appropriate loss functions to best suit each task.

The decoder is trained to approximate positive integral values of varying scale since we may have small values for padding, kernel size, type and stride and large values for number of outputs. To enable faster convergence, we use a mean squared error.
\begin{equation}
L_d(\hat{x}_{1:T}, x_{1:T}) = \frac{1}{T} \sum_{i=0}^{T} |\hat{x}_i - x_i|_2
\end{equation}

A standard L-1 loss is used for the accuracy regressor since the output of the sigmoid function is bounded between $[0, 1]$.
\begin{equation}
L_a(\hat{a}, a^*) = |\hat{a} - a^*|_1
\end{equation}

For training the parameter regressor, we use the Huber loss with $\delta=1$. This loss is less sensitive to outliers in the data and does not suffer from exploding gradients as described in \cite{girshick2015}. This is important since the domain of the parameter count is $\mathbb{R}$ and it is possible that the training set may have outliers such as large networks that do not converge.

\begin{equation}
L_p(\hat{p}, p^*) =  \begin{cases} 
        \frac{1}{2} (\hat{p} - p^*)^2 & \text{if} |\hat{p} - p^*| \leq 1 \\
        |\hat{p} - p^*| - \frac{1}{2} & \text{otherwise} 
   \end{cases}
\end{equation}

The network is then trained to minimize the joint loss function:
\begin{equation}
L = L_d + \lambda_1 L_a + \lambda_2 L_p
\end{equation}
\subsection{Architecture compression using gradient descent}

\label{alg:compress}
\begin{algorithm}[H]
\small
\caption{Architecture Compression}
\begin{algorithmic}[1]
\Procedure{Train}{$\mathcal{A}, D$}
\For{$\mathcal{A}$ in Architectures}
\State $a^* \leftarrow \text{train}(A, D)$
\State $p^* \leftarrow (\text{num\_params}(A)- \bar{p})/\sigma_p$
\State $x \leftarrow o(\mathcal{A})$ 
\State $y \leftarrow  \text{encoder}_{\theta_1}(x)$
\State $\hat{a} \leftarrow \text{acc\_regressor}_{\theta_1} (y)$
\State $\hat{p} \leftarrow \text{param\_regressor}_{\theta_3}(y)$
\State $\hat{x} \leftarrow \text{decoder}_{\theta_4}(y)$
\State $L = L_\text{d}(\hat{x}, x) + \lambda_1 L_{\text{a}}(\hat{a}, a^*) + \lambda_2 L_{\text{p}}(\hat{p}, p^*)$
\State $\theta = \text{argmin}_{\theta} L$
\EndFor
\EndProcedure
\Procedure{Compress}{$\mathcal{A}$}
\State $x \leftarrow o(\mathcal{A})$
\State $y \leftarrow  \text{encoder}_{\theta_1}(x)$
\State $\hat{a} \leftarrow \text{acc\_regressor}_{\theta_2}(y)$
\State $\hat{p} \leftarrow \text{param\_regressor}_{\theta_3}(y)$
\State $L_c = L_{\text{a}}(\hat{a}, 1) + \lambda L_{\text{p}}(\hat{p}, 0)$
\State $y \leftarrow y + \eta \frac{d L_c}{dy}$
\State $x_{\text{comp}} \leftarrow \text{decoder}_{\theta_4}(y)$
\State $\mathcal{A}_{comp} \leftarrow o^{-1}(x_\text{comp})$
\EndProcedure
\end{algorithmic}
\end{algorithm}
Once the inference network has converged and can predict the accuracy and number of parameters of a network, we use the network for architecture compression.

In order to compress an architecture, we perform gradient descent on the continuous embedding space using the compression objective function. 
\begin{equation}
L_c(\hat{a}, \hat{p}) = L_a(\hat{a}, 1) + \lambda L_p(\hat{p}, 0)
\end{equation}
The compression objective function simultaneously minimizes the number of parameters and maximizes the accuracy of the network to achieve compression. We can also incorporate known hardware constraints $p_0$ or accuracy requirements $a_0$ to relax this objective function by modifying it as follows
\begin{equation}
L_c(\hat{a}, \hat{p}) = L_a(\hat{a}, a_0) + \lambda L_p(\hat{p}, p_0)
\end{equation}

The gradient from this objective is then backpropagated to the continuous embedding $y$ with learning rate $\eta$ to generate a new embedding $y'$.
\begin{equation} y' = y + \eta \frac{d L_c}{d y} \end{equation}

The decoder converts this embedding into a temporal sequence $x_\text{comp}$ that is then discretized and converted into an architecture $A_\text{comp} = o^{-1}(x_\text{comp})$.

%\subsubsection{Finetuning}
%In the event that the training set does not cover architectures with similar structure to the input architecture, a "finetuning" step may be performed where...
%Empirically, 10 iterations of finetuning yields better compression results.
\label{sec:experiments}
\section{Experiments}
\begin{figure*}[!tbh]
  \centering
  \hspace{-2em}
  \begin{subfigure}[b]{0.25\textwidth}
  \includegraphics[width=\textwidth]{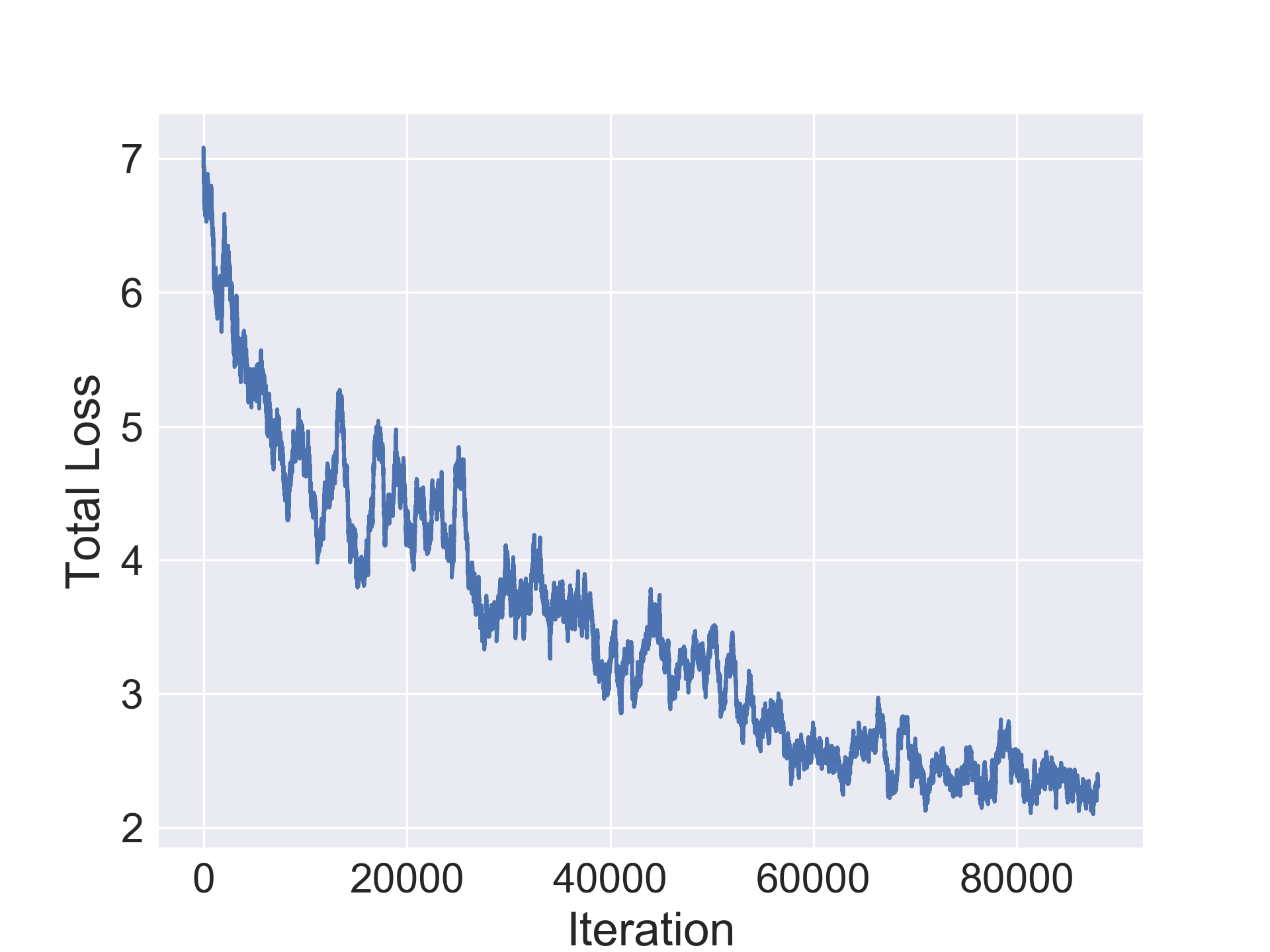}
  \end{subfigure} \hspace{-1.5em}
  \begin{subfigure}[b]{0.25\textwidth}
  \includegraphics[width=\textwidth]{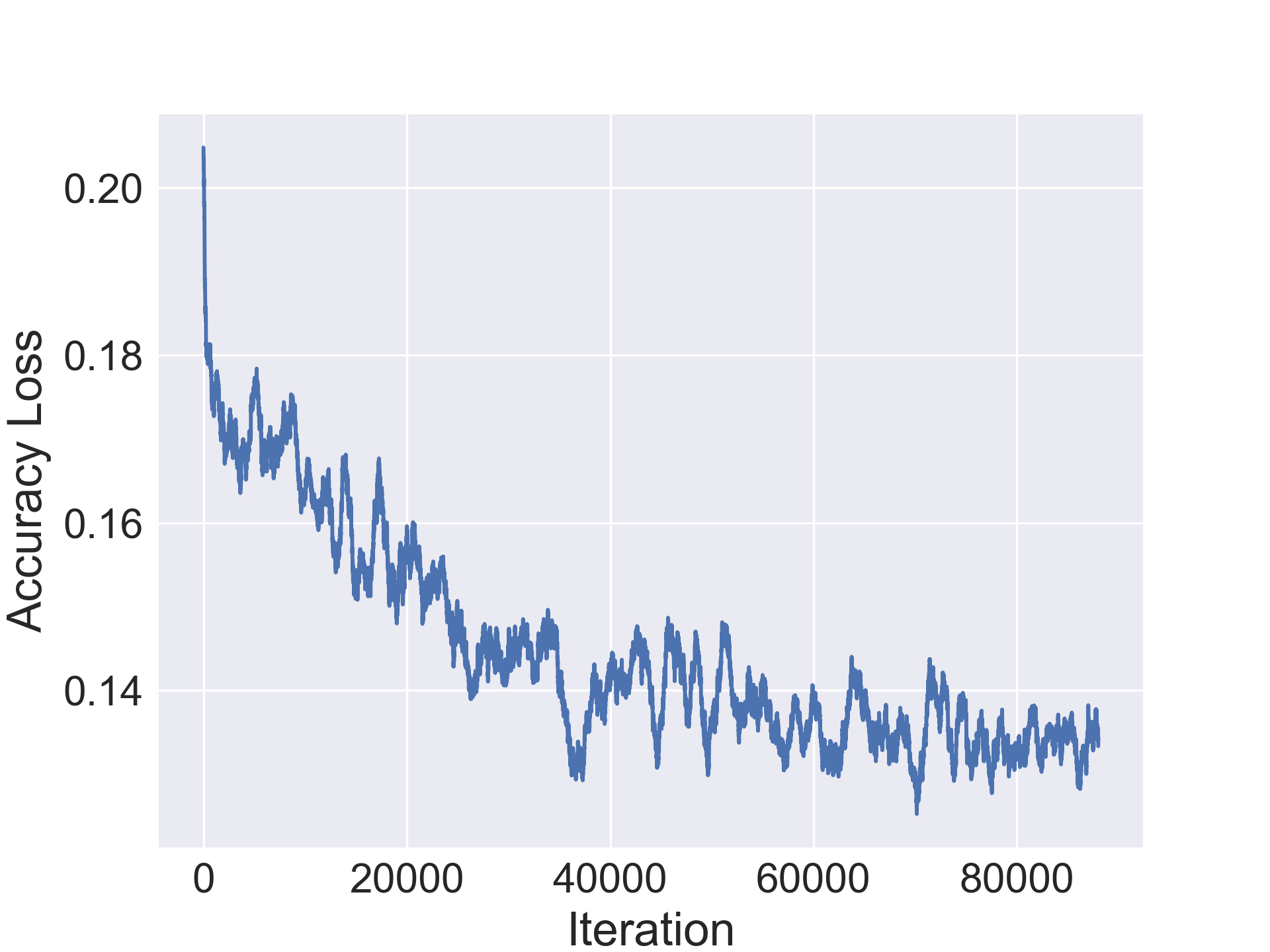}
  \end{subfigure} \hspace{-1.5em}
  \begin{subfigure}[b]{0.25\textwidth}
  \includegraphics[width=\textwidth]{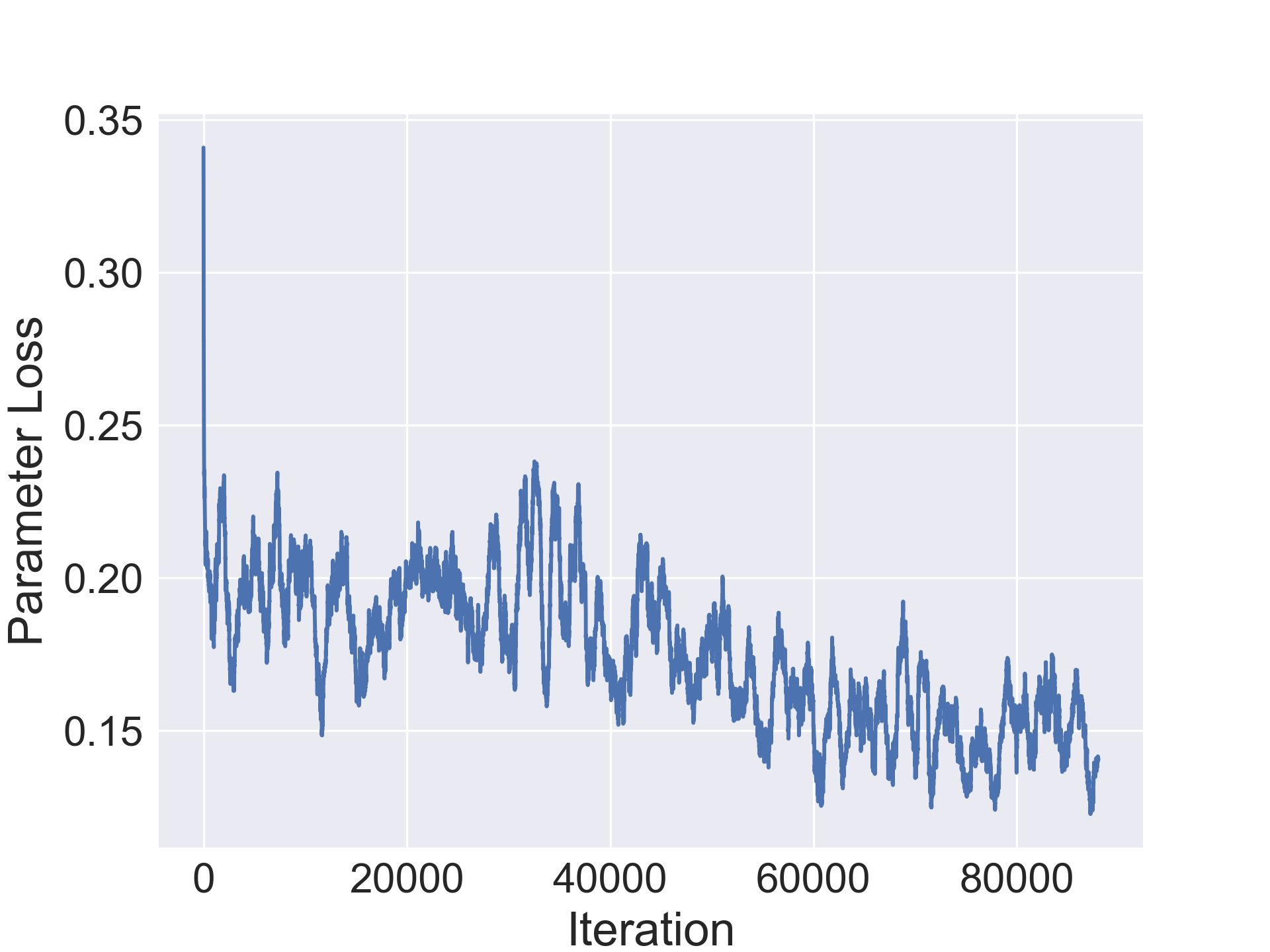}
  \end{subfigure} \hspace{-1.5em}
  \begin{subfigure}[b]{0.25\textwidth}
  \includegraphics[width=\textwidth]{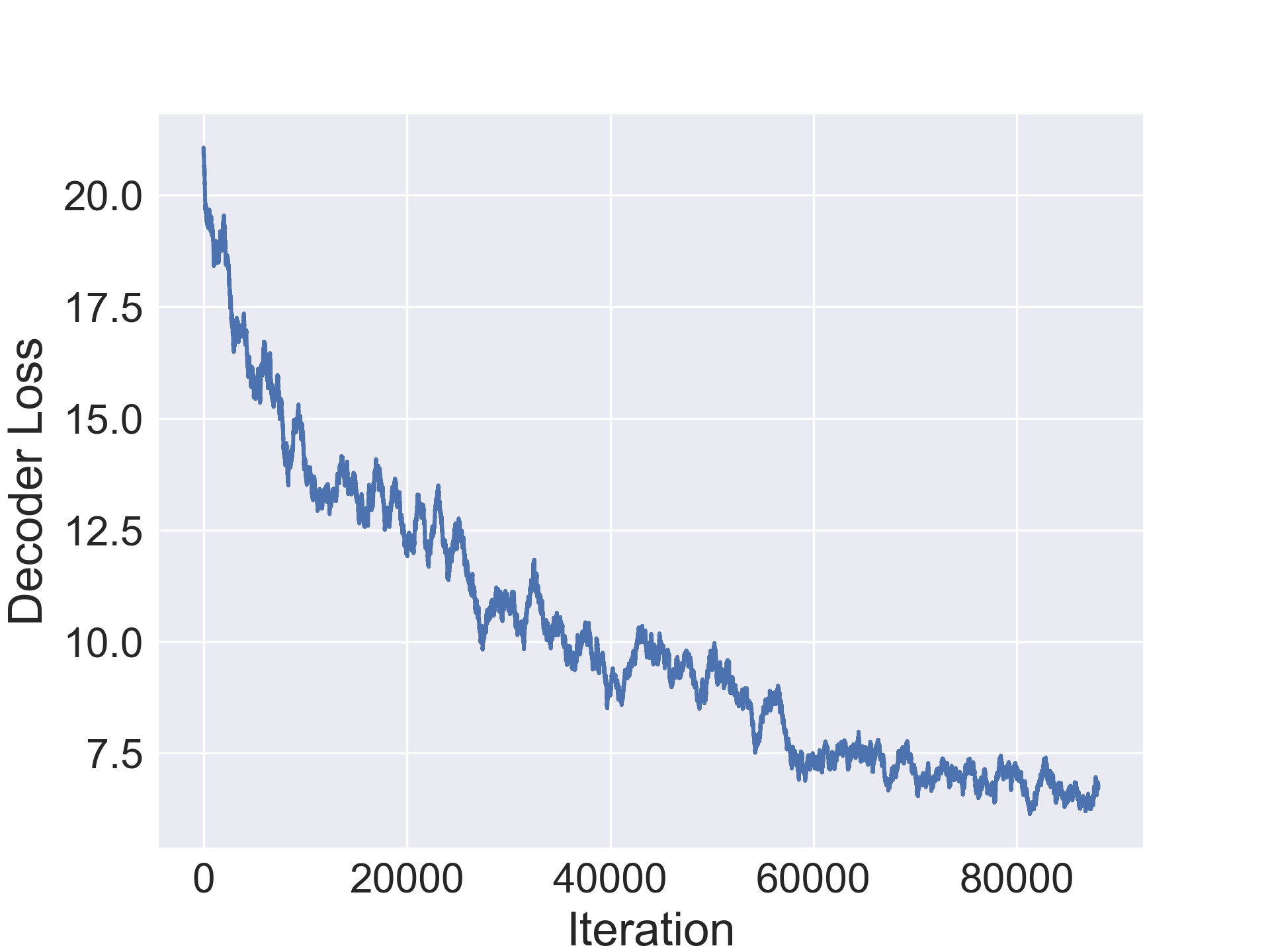}
  \end{subfigure} \hspace{-1.5em}
  \\
  \vspace{-1.2em}
  \hspace{-2em}
  \begin{subfigure}[b]{0.25\textwidth}
  \includegraphics[width=\textwidth]{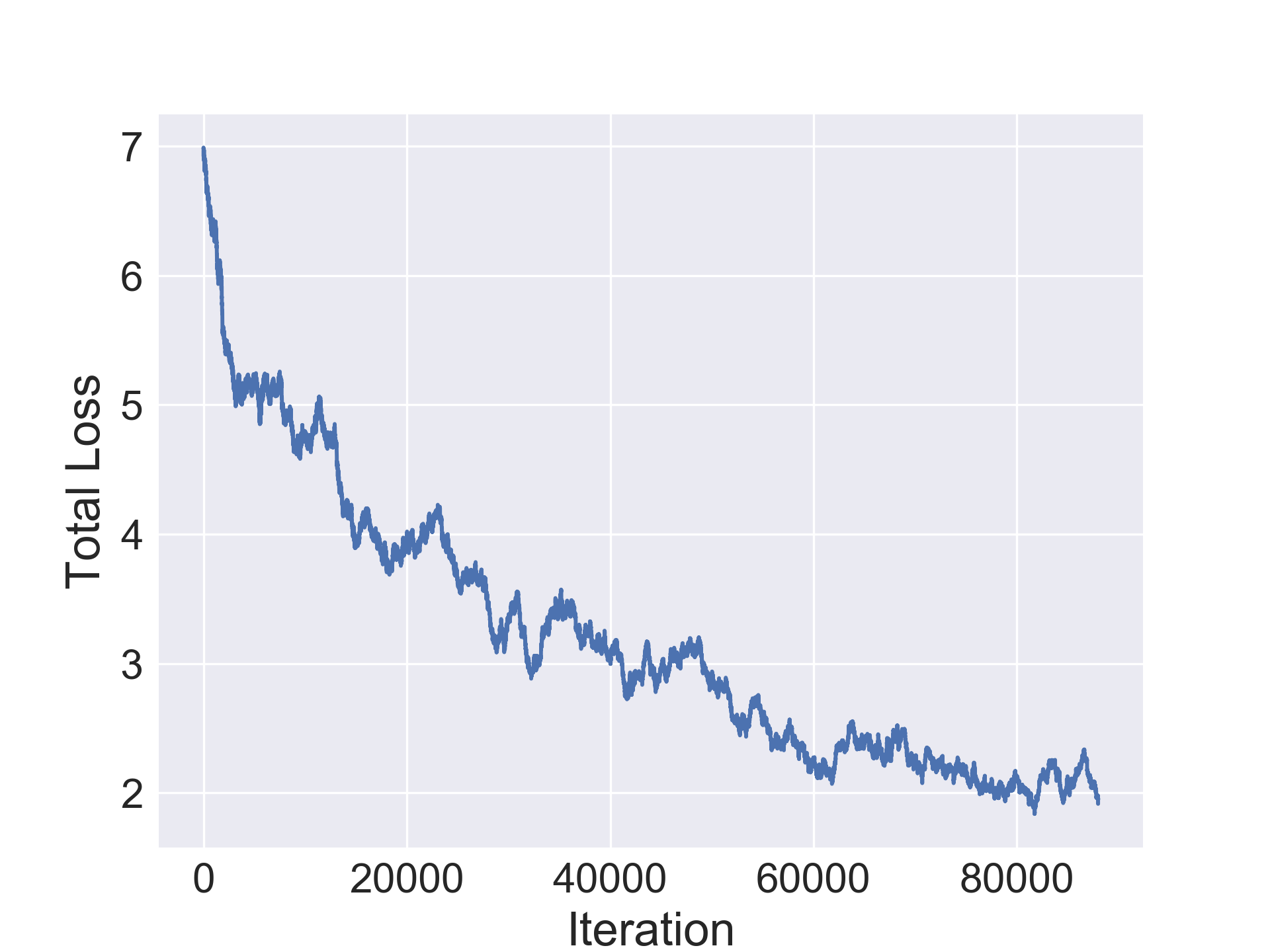}
  \end{subfigure} \hspace{-1.5em}
  \begin{subfigure}[b]{0.25\textwidth}
  \includegraphics[width=\textwidth]{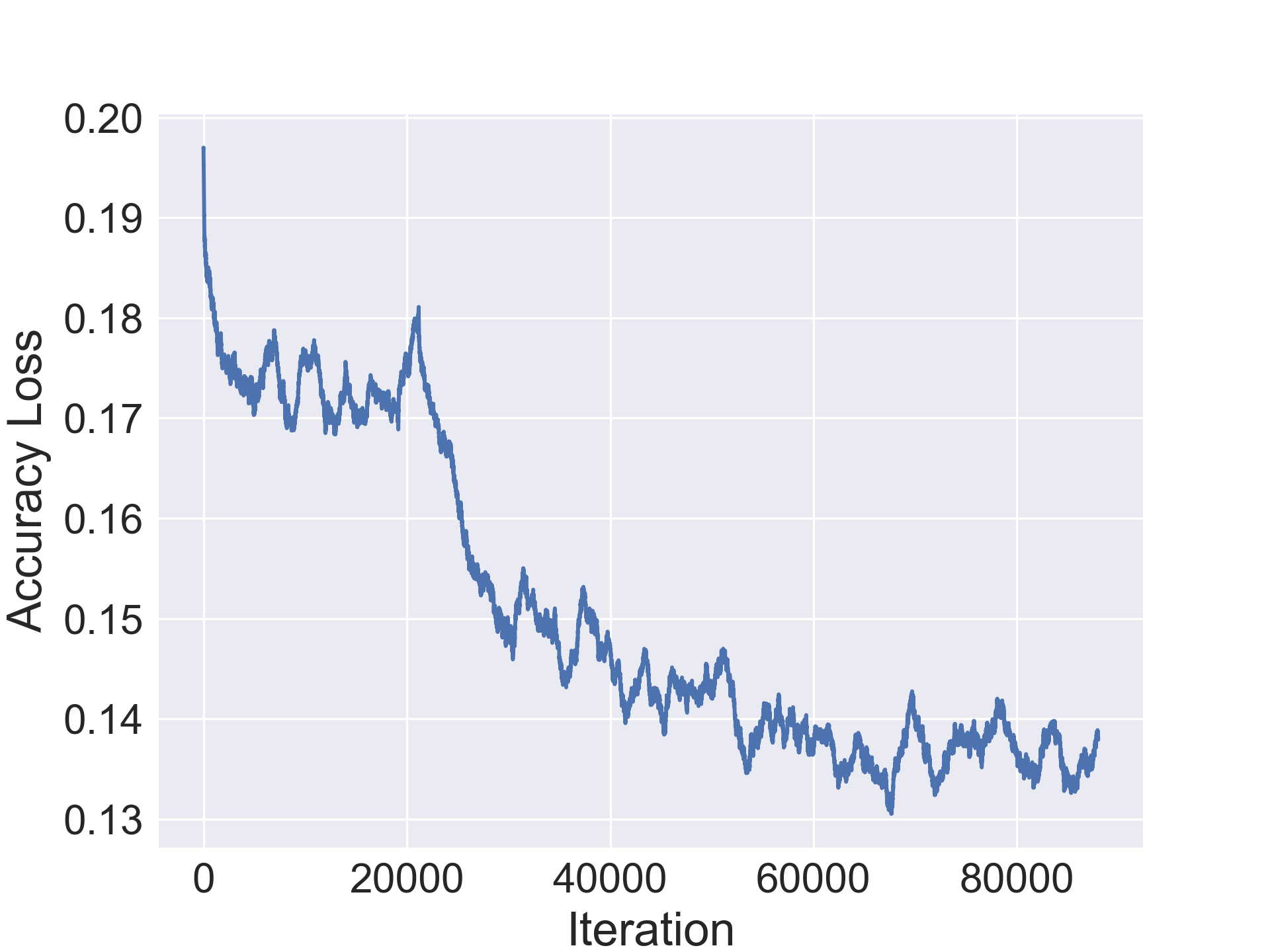}
  \end{subfigure} \hspace{-1.5em}
  \begin{subfigure}[b]{0.25\textwidth}
  \includegraphics[width=\textwidth]{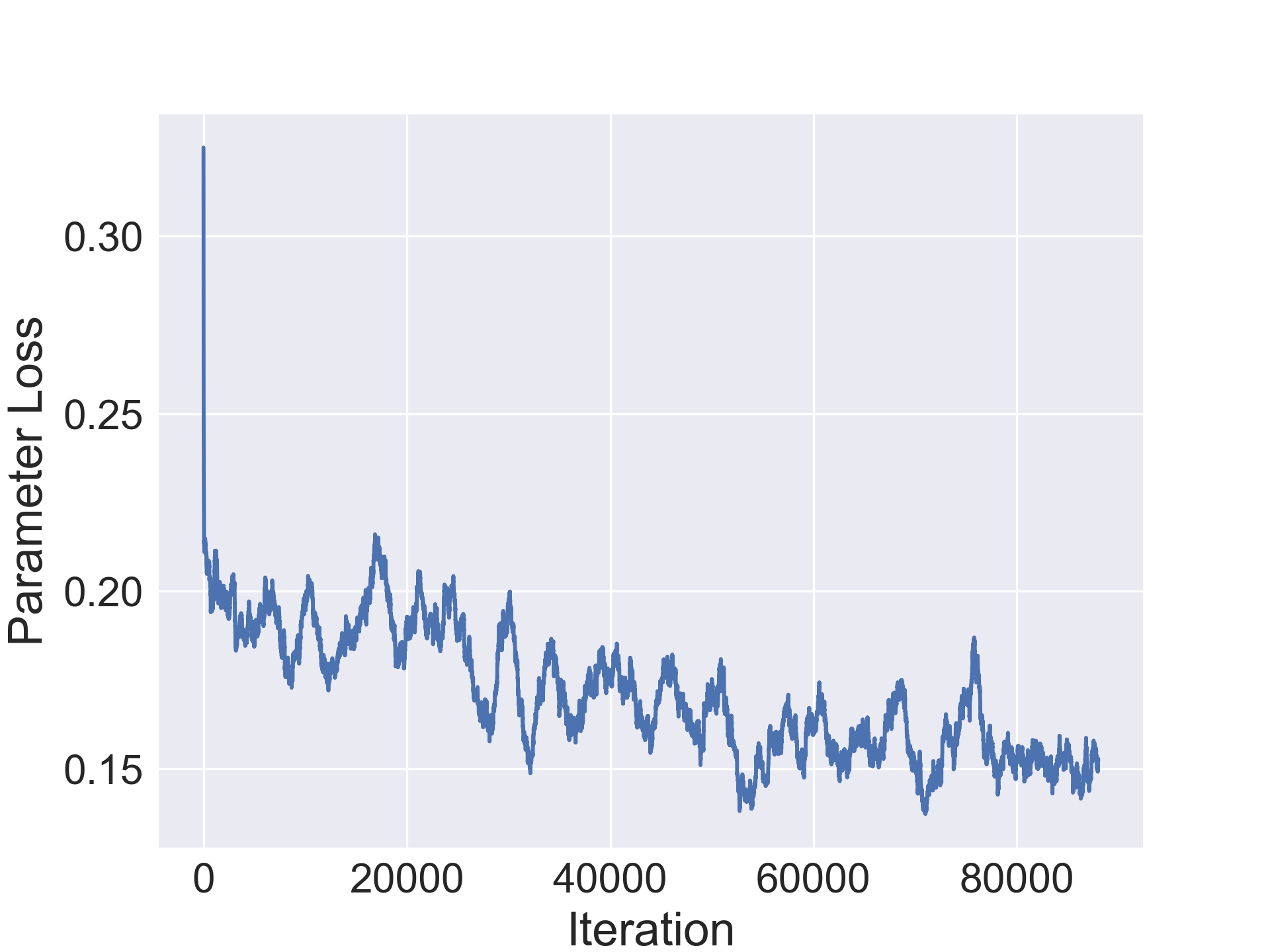}
  \end{subfigure} \hspace{-1.5em}
  \begin{subfigure}[b]{0.25\textwidth}
  \includegraphics[width=\textwidth]{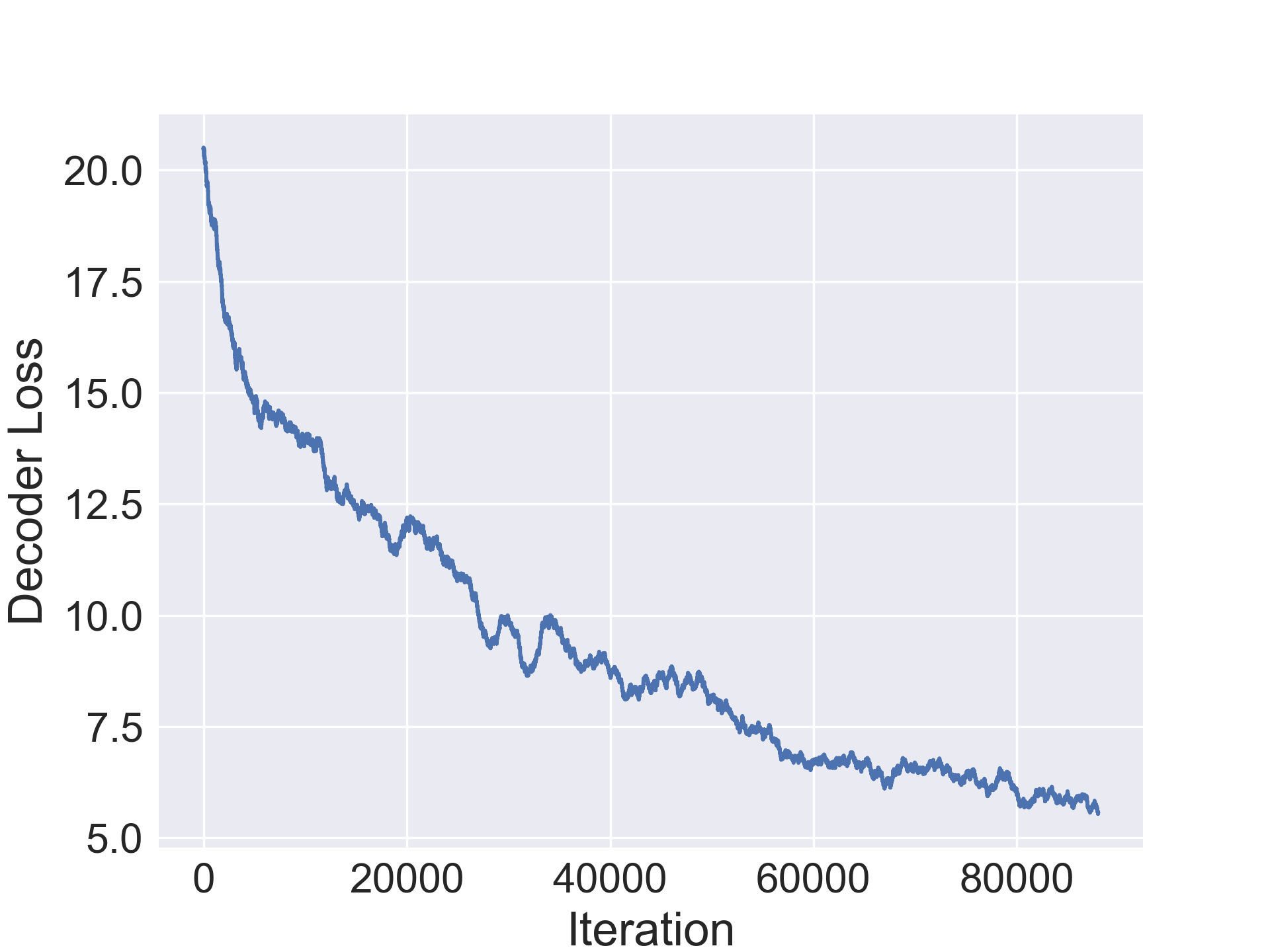}
  \end{subfigure} \hspace{-1.5em}
  \\
  \vspace{-1.2em}
  \hspace{-2em}
  \begin{subfigure}[b]{0.25\textwidth}
  \includegraphics[width=\textwidth]{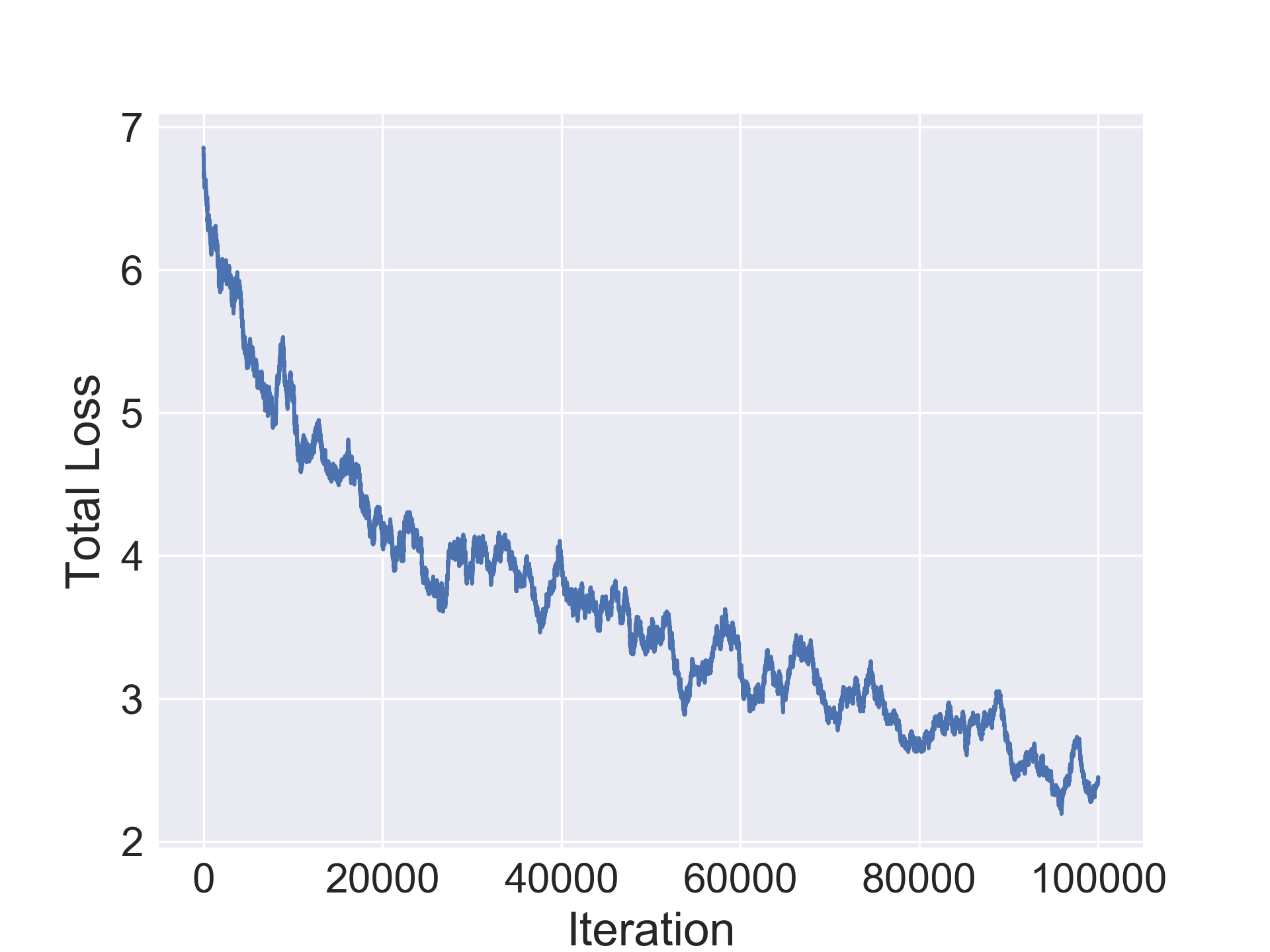}
  \end{subfigure} \hspace{-1.5em}
  \begin{subfigure}[b]{0.25\textwidth}
  \includegraphics[width=\textwidth]{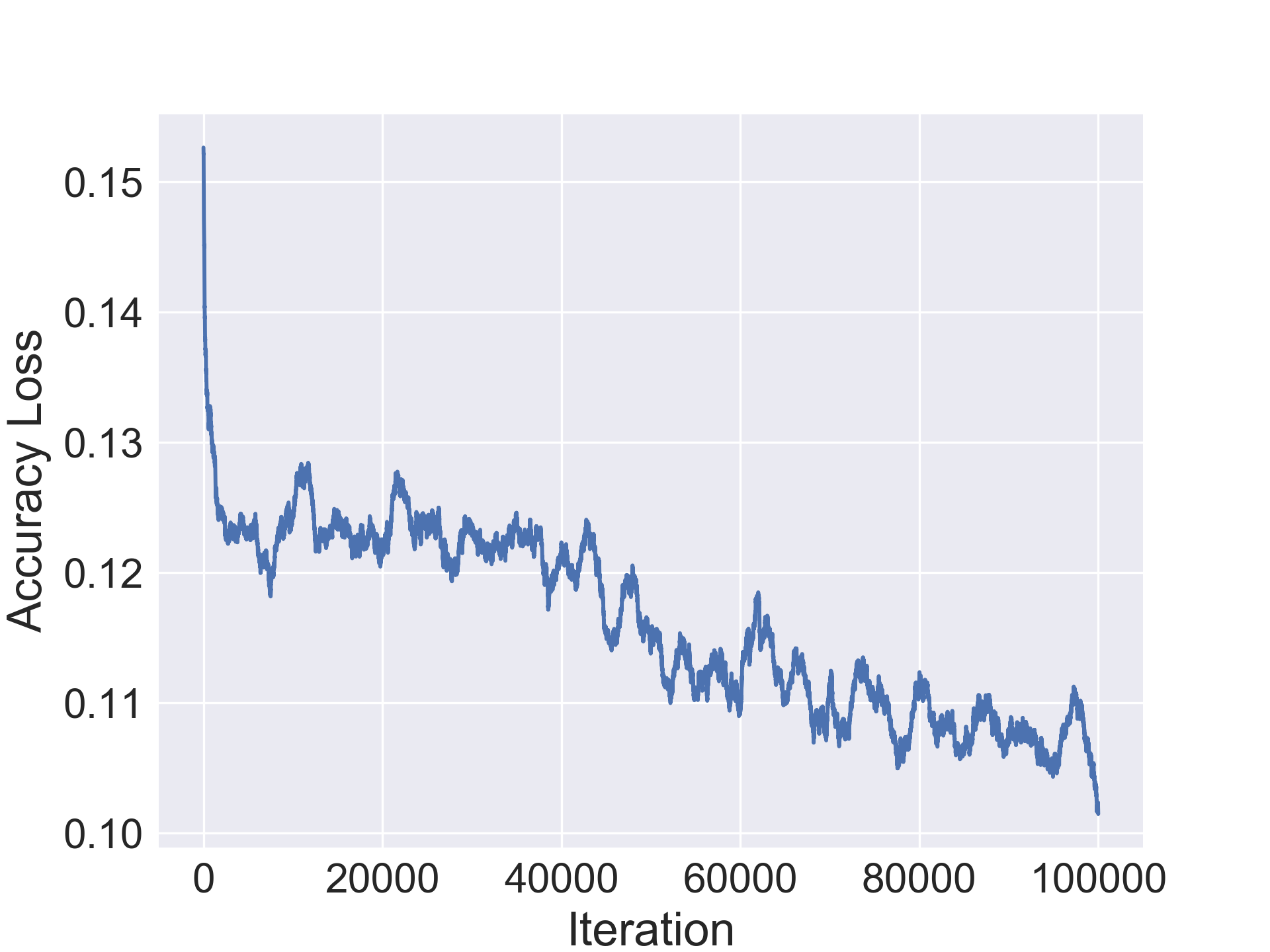}
  \end{subfigure} \hspace{-1.5em}
  \begin{subfigure}[b]{0.25\textwidth}
  \includegraphics[width=\textwidth]{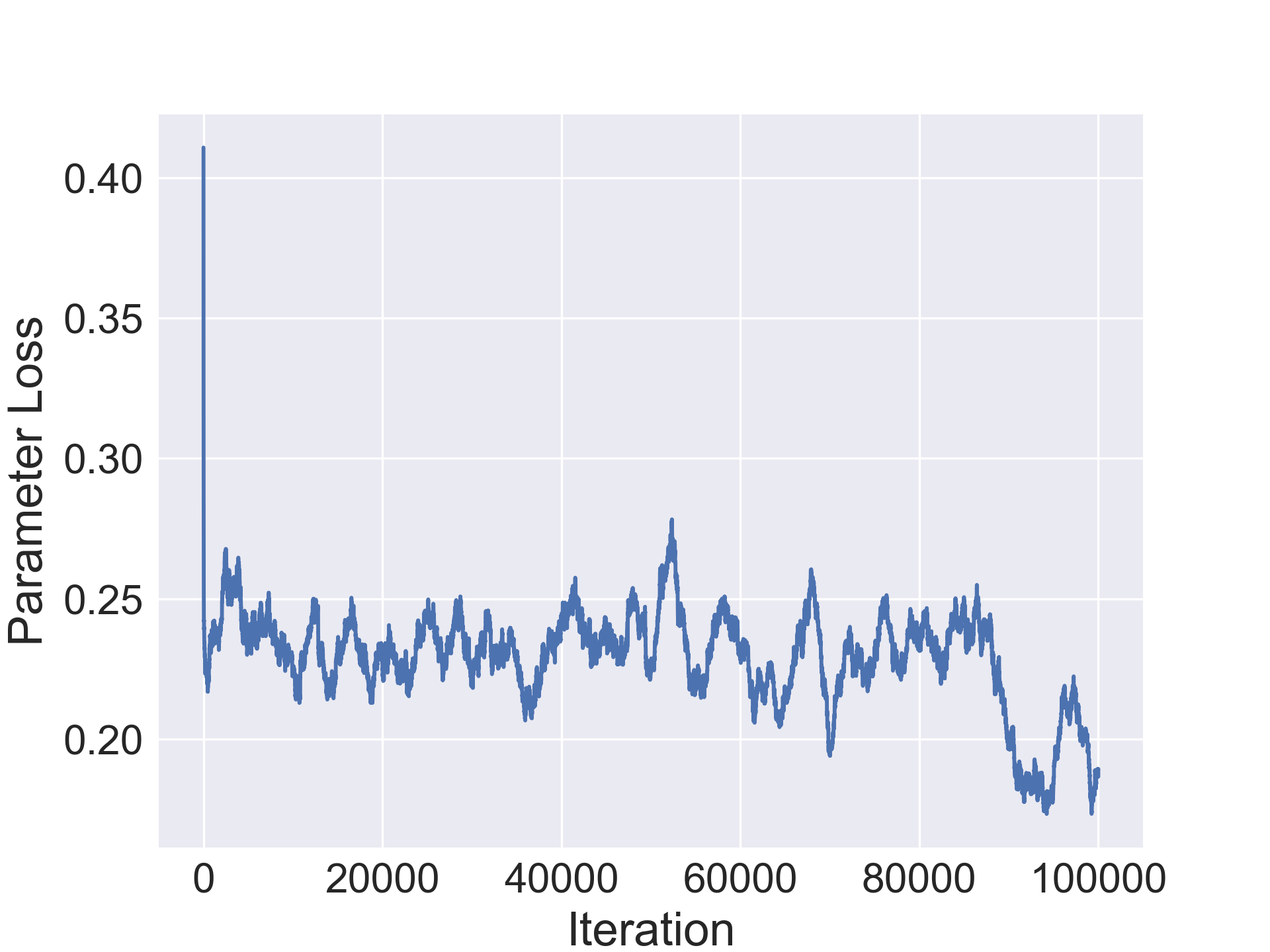}
  \end{subfigure} \hspace{-1.5em}
  \begin{subfigure}[b]{0.25\textwidth}
  \includegraphics[width=\textwidth]{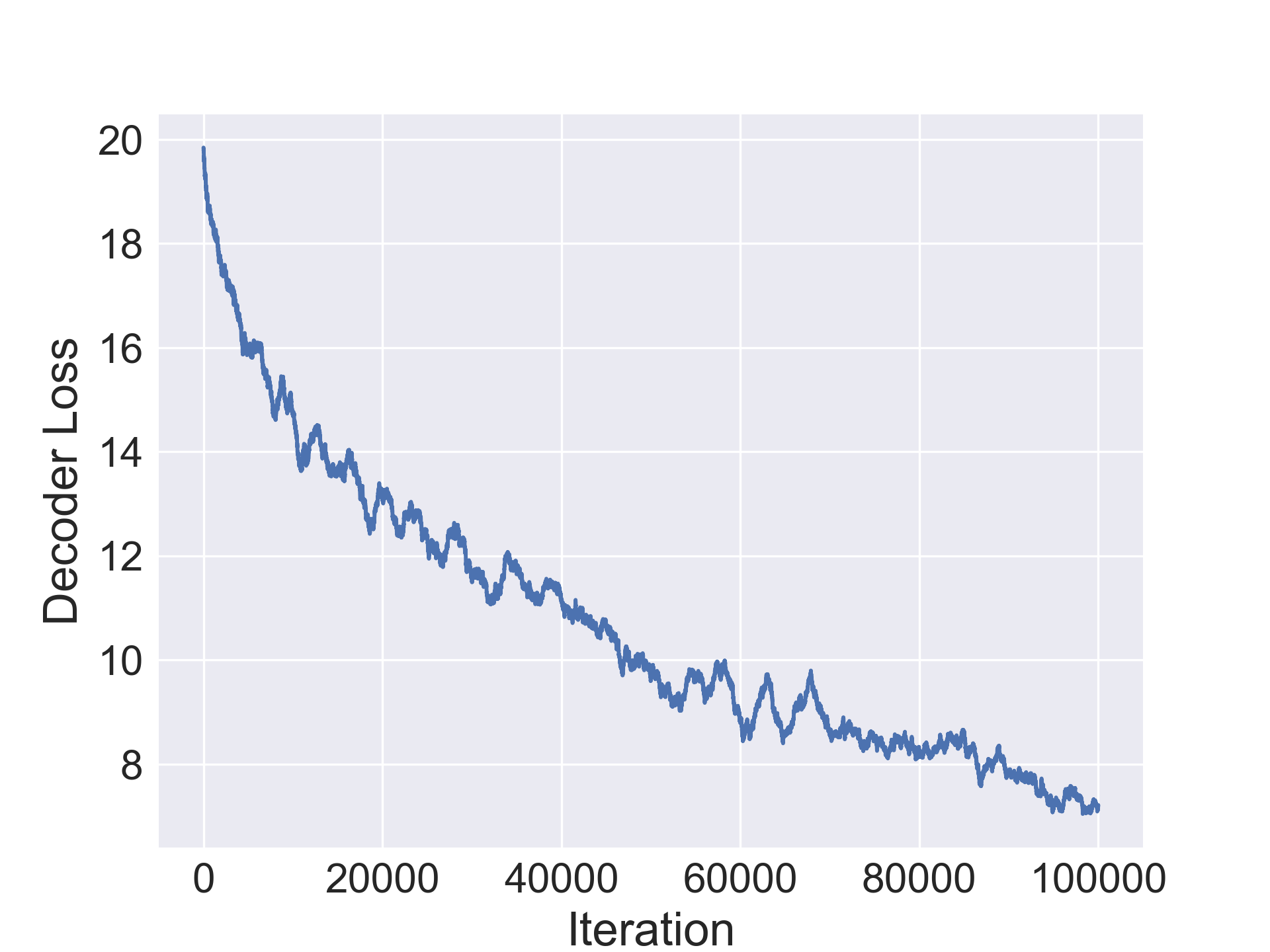}
  \end{subfigure} \hspace{-1.5em}
  \\
  \vspace{-1.2em}
  \hspace{-2em}
  \begin{subfigure}[b]{0.25\textwidth}
  \includegraphics[width=\textwidth]{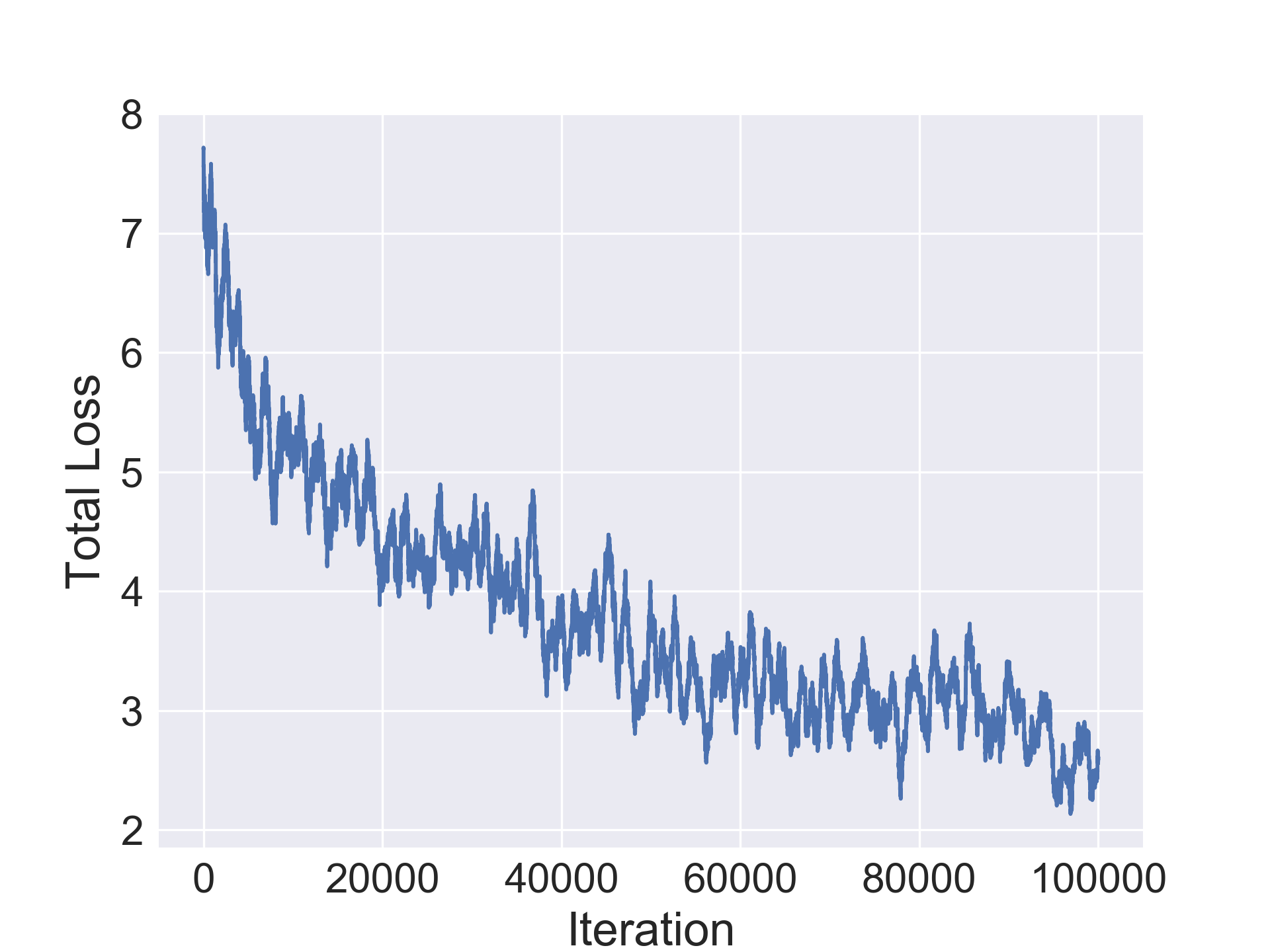}
  \end{subfigure} \hspace{-1.5em}
  \begin{subfigure}[b]{0.25\textwidth}
  \includegraphics[width=\textwidth]{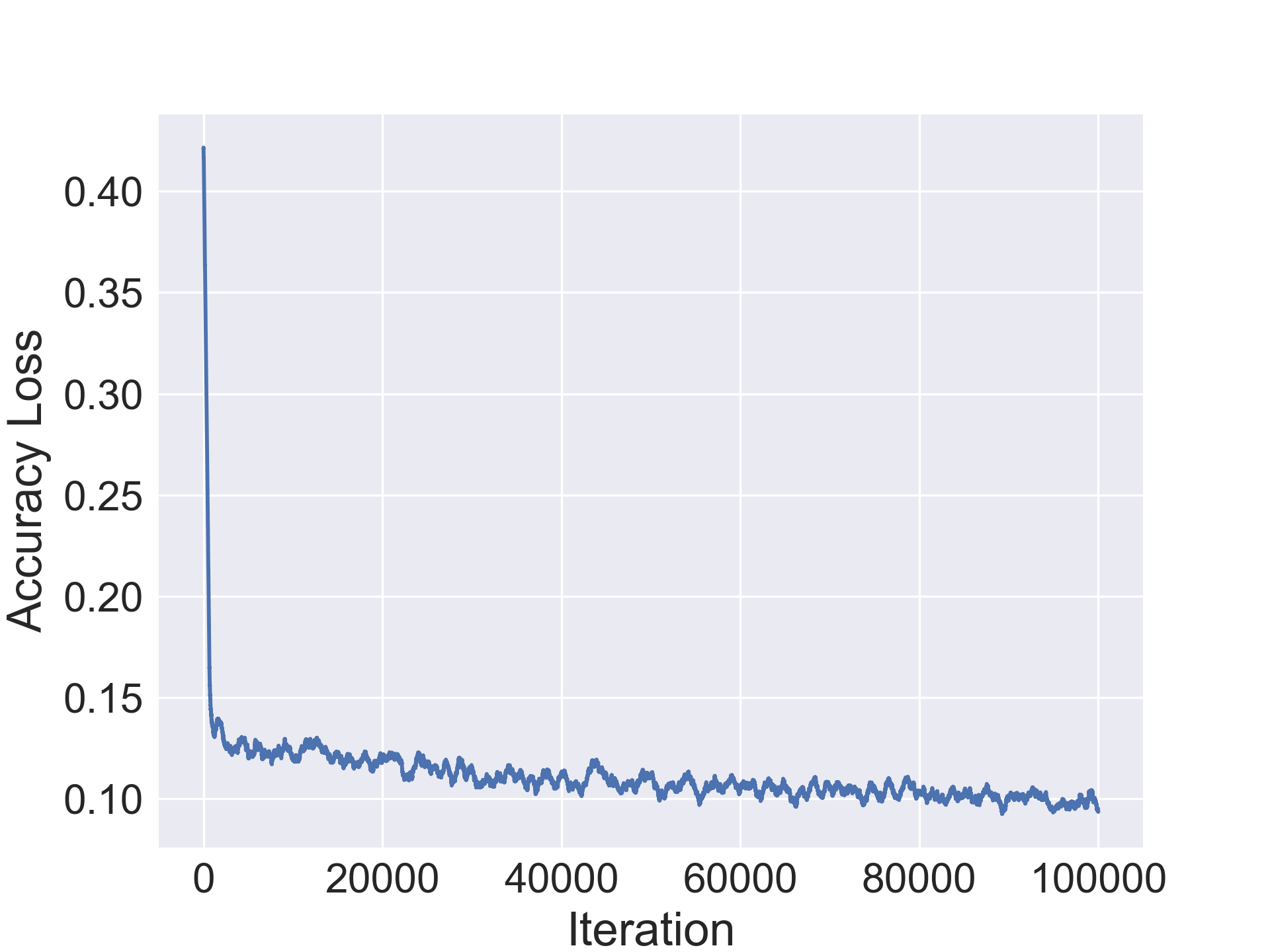}
  \end{subfigure} \hspace{-1.5em}
  \begin{subfigure}[b]{0.25\textwidth}
  \includegraphics[width=\textwidth]{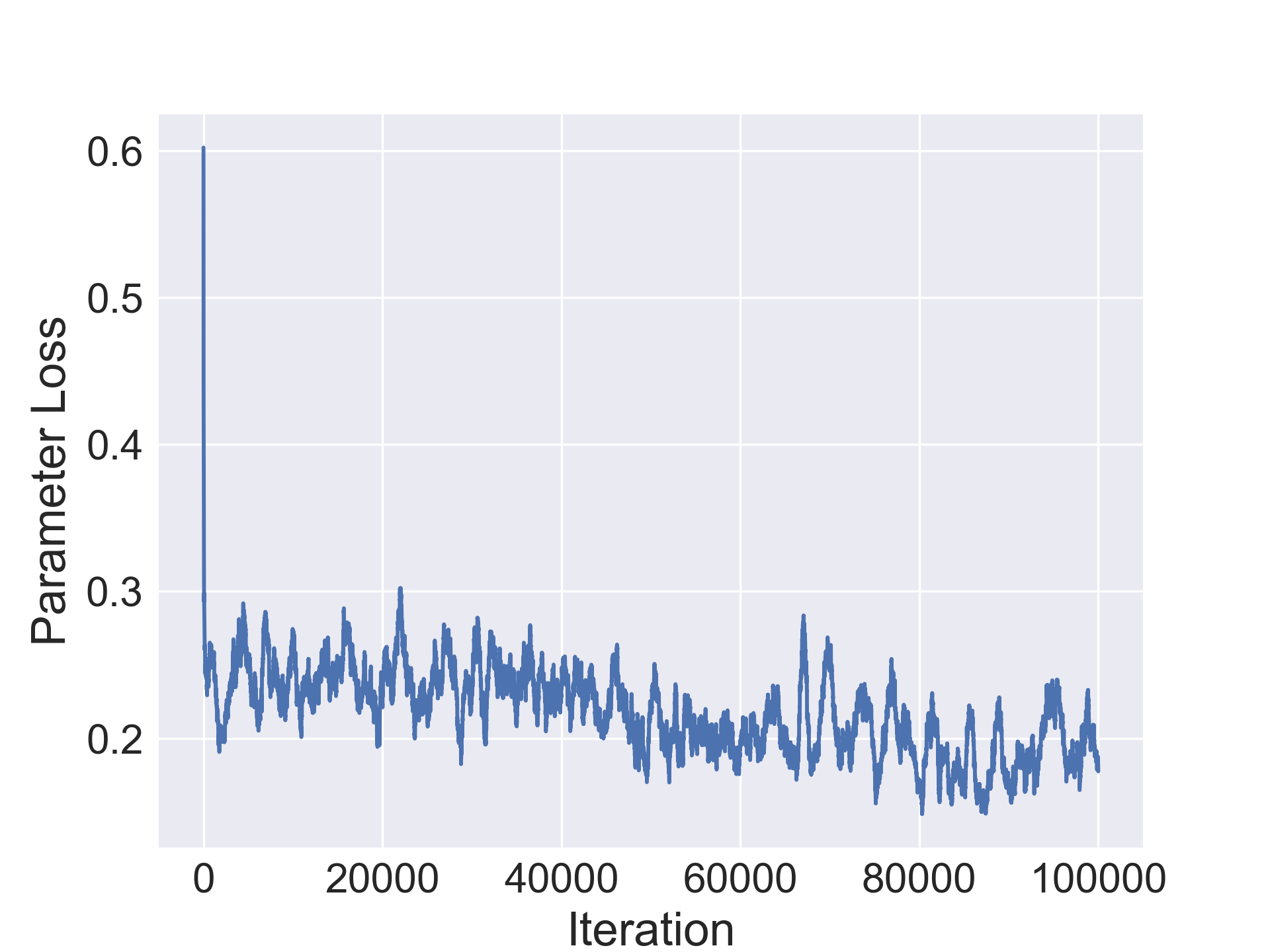}
  \end{subfigure} \hspace{-1.5em}
  \begin{subfigure}[b]{0.25\textwidth}
  \includegraphics[width=\textwidth]{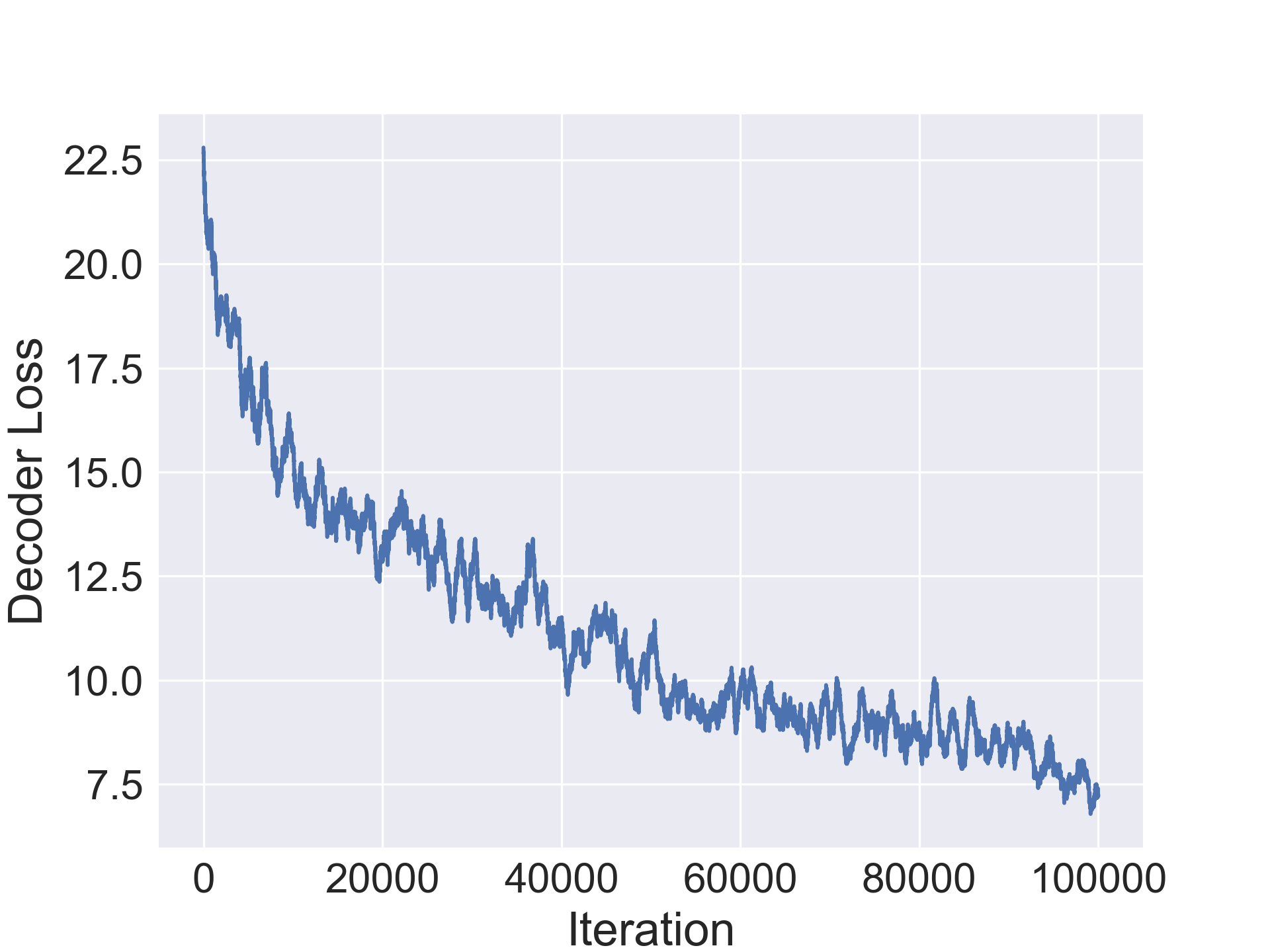}
  \end{subfigure} \hspace{-1.5em}
  \caption{Joint training loss on (from top-bottom) \textbf{Fashion-MNIST}, \textbf{SVHN}, \textbf{CIFAR-10}, \textbf{CIFAR-100}}
  \label{fig:cifar}
\end{figure*}
% TODO:
% 1. Describe what hypothesis is being tested (Done)
% 2. Describe architecture generation procedure (Done)
% 3. Describe transfer learning from cifar-10 to imagenet (Done)
% 4. Describe speed instead of parameters experiment (Done)

In this section, we empirically evaluate our approach, showing that it is capable of compressing networks by upwards of 10x by benchmarking them on modern classification tasks such as CIFAR-10, CIFAR-100, Fashion-MNIST and SVHN.% and ImageNet.
We show that our approach outperforms current state of the art architecture compression techniques by comparing the results to several baselines. %Finally, we show that the models generated for CIFAR-10 also transfer well for larger scale classification tasks such as ImageNet.
%  Furthermore, we show empirical evidence that our approach works with other optimization objectives such as inference speed.

In the following experiments, we used a randomly generated dataset of 1500 architectures for training the encoder/decoder and regressors. Each architecture was trained for each task for 5 epochs with m=5 different random initializations to obtain a target expected accuracy. As observed in \cite{ashok2018n2n, zoph2016neural, tan2018mnasnet}, 5 epochs of training seems to provide sufficient signal about the convergence characteristics of the network. All of the following experiments were run on 2x NVIDIA 1080TI GPUs. Additional training details such as choice of optimizer and hyperparameters are included in Section \ref{sec:training_details} in the appendix.

\subsection{Datasets}
\textbf{Fashion-MNIST}
The Fashion-MNIST \cite{xiao2017} dataset consists of $28\times28$ pixel grey-scale images depicting images of fashion products from 10 categories. We use the standard 60,000 training images and 10,000 test images for experiments.

\textbf{CIFAR-10}
The CIFAR-10 \cite{krizhevsky2009learning} dataset consists of 10 classes of objects and is divided into 50,000 train and 10,000 test images (32x32 pixels). This dataset provides an incremental level of difficulty over the Fashion-MNIST dataset, using multi-channel inputs to perform model compression.

\textbf{SVHN}
The Street View House Numbers \cite{netzer2011reading} dataset contains 32x32 colored digit images with 73257 digits for training, 26032 digits for testing. This dataset is slightly larger that CIFAR-10 and allows us to observe the performance on a wider breadth of visual tasks.

\textbf{CIFAR-100}
To further test the robustness of our approach, we evaluated it on the CIFAR-100 dataset. CIFAR-100 is a harder dataset with 100 classes instead of 10, but the same amount of data, 50,000 train and 10,000 test images (32x32). Since there is less data per class, there is a steeper size-accuracy tradeoff.

% \textbf{ImageNet}
% We evaluate the ability of our approach to find generalizable architecture by compressing a network for the CIFAR-10 dataset and then training it on ImageNet. ImageNet 2012 classification
% dataset \cite{krizhevsky2012imagenet} that consists of 1000 classes. The models
% are trained on the 1.28 million training images, and evaluated
% on the 50k validation images.
\subsection{Training details}
\label{sec:training_details}
\subsubsection{Randomly generated architectures}
This section describes the implementation details for the training procedure of the randomly generated architectures. All the experiments used the Adam optimizer and were run on the PyTorch framework.
The same procedure was used to train the final output architecture. \\
\textbf{Fashion-MNIST} The architectures for Fashion-MNIST were trained for 50 epochs with a starting learning rate of 0.01. The learning rate is reduced by a factor of 10 in the 30th epoch. A batch size of 64 was used.\\
\textbf{CIFAR-10/100} The architectures for CIFAR-10/100 were trained for 150 epochs with a starting learning rate of 0.001. The learning rate is decreased by a factor of 10 in the 80th and 120th epochs. Standard data augmentation with horizontal mirroring (p=0.5), random cropping with padding of 4 pixels and mean subtraction of (0.5, 0.5, 0.5). A batch size of 128 was used.\\
\textbf{SVHN} The architectures for SVHN were trained for 150 epochs with a starting learning rate of 0.001. The learning rate is decreased by a factor of 10 in the 80th and 120th epochs. Mean subtraction of (0.5, 0.5, 0.5) and a batch size of 128 was used.\\

\subsubsection{Architecture compression search}
The encoder, decoder and regressors were optimized using the SGD optimizer with nesterov momentum=0.5 and a learning rate of 0.003. The CIFAR-10/100 networks were trained jointly for 100000 iterations while the SVHN and Fashion-MNIST ones were trained for 80000 iterations. Furthermore, learning rate decay was used with period 30 and decay multiplier 1/5.

\subsection{Compression experiments}
\begin{table}[!htb]
  \caption{Summary of compression results}
  \label{tab:summary}
  \centering
  \begin{tabular}{lllll}
    \toprule
    Model & Acc.      & \#Params  & \(\Delta\) Acc.  & Compr\\
    \midrule                                    
    \multicolumn{5}{c}{Fashion-MNIST}               \\
    \midrule
    CONV-7      & 91.46\%       & 708K       &  +0.98\%      &   7.61x    \\
    \midrule                                    
    \multicolumn{5}{c}{CIFAR-10}               \\
    \midrule
    CONV-10      &  92.35\%      & 24.4M       & -0.04\%        &  20.33x      \\
    \midrule                                    
    \multicolumn{5}{c}{CIFAR-100}               \\
    \midrule
    CONV-10      &  70.95\%      &  24.4M      &   -1.32\%     &     4.51x   \\
    \midrule                                    
    \multicolumn{5}{c}{SVHN}               \\
    \midrule
    CONV-10      &  96.02\%      &  24.4M      &  -0.63\%      &    8.76x    \\
    \bottomrule
  \end{tabular}
\end{table}
In this section we evaluate the ability of our approach to compress architectures. For our experiments, we use a standard convolutional architecture consisting of stacked conv-bn-relu blocks and a fully connected layer. We use a 7 block network CONV-7 for Fashion-MNIST and 10 block network CONV-10 for the others. Table \ref{tab:summary} shows original average accuracy of the model on the validation set, followed by the number of parameters in the original model followed by two columns for the improvement in accuracy in the compressed architecture and the compression rate. We notice a minor improvement in accuracy by the compressed architecture in Fashion-MNIST while observing a minor drop in the other datasets. Additionally, our method is able to achieve solid compression on all the datasets and up to 20x compression on CIFAR-10 (1.2M parameters for final compressed architecture).

Analysis of the final models produced by the compressor reveals several frequently occurring patterns. First, the compressor seems to learn to reduce the capacity of the models by removing layers and replacing them with non-parametric layers. Examples of this include sequential ReLU layers, Max Pooling layers with kernel and stride of 1, sequential batch norm layers and Dropout layers with dropout probability of 0. 

An extreme example of such behavior was observed by modifying the compression objective to bias more towards compression rather than accuracy preservation. The topology of the network is provided in the appendix. 

Another interesting observation was that the compressor often produced models with a batch normalization layer before the first convolutional layer. It is interesting to note that this has already been suggested in the past by prior work that claims it better conditions the input without requiring explicit data whitening. This seems to indicate that further analysis of the networks produced by our compressor leads to novel network motifs and concepts.
 
Another interesting behavior we observed during the compression process was that the regressors seem to accurately predict the accuracy and compression ratio of the compressed network well for the initial (5 or so) iterations, but starts to diverge subsequently. This was especially true of out of training distribution networks that were used during test time. We observed that finetuning the compressor with a few iterations of supervised training and a small learning rate substantially improved the accuracy of the regressors.

% Along this line of analysis, we also observed that varying the weight of the objective function quite strongly correlated with the true

\subsubsection{Baselines}
We compare our approach to various model compression baselines including reinforcement learning, pruning and knowledge distillation on the CIFAR-10 dataset. The experiments show that our approach is able to find a smaller and more accurate model than the other approaches.
\begin{table}[tbh!]
  \caption{Baselines on CIFAR-10}
  \centering
  \tabcolsep=0.1cm
  \scalebox{1}{
  \begin{tabular}{lllll}
    \toprule
    Model                   & Acc.      & \#Params \\
    \midrule
    \cite{romero2014fitnets} (Distillation)        & 91.33\%         & 1.2M \\
    \cite{molchanov2016pruning} (Pruning) & 91.06\%         & 2.3M\\
    \cite{ashok2018n2n} (RL) & 92.05\%         & 1.7M\\
    Ours & \textbf{92.31\%}         & \textbf{1.2M}\\
    \bottomrule
  \end{tabular}
  }
  \label{tab:pruning_baseline}
\end{table}

In order to provide a fair comparison to the each of the baselines, the following methodology was used. For all the experiments below, we used the same base model that was used for the compression experiments. \\
For distillation, a FitNet model with a similar number of parameters to that produced by our method was used. This model was trained to convergence using the approach described in the paper. \\
For pruning, we stop pruning when 1. accuracy drops below 1\% of the student model obtained by our method or 2. the number of parameters is less than our method. Pruning is done 5 times to control for variance and the best performing model is reported.\\
For the RL approach, we selected the output student model that was most similar to the model produced by our approach. Although the model has a higher number of parameters, our model still achieves a higher final accuracy than that produced by the RL approach.

% \subsection{Experiments with inference speed as objective}

%\subsection{Architecture transfer for large scale task}

% \begin{table}[tbh!]  
%   \caption{Results using inference speed as objective}
%   \centering
%   \scalebox{1}{
%   \begin{tabular}{lllll}
%     \toprule
%     Model                   & Speed     & $\Delta$Speed & Val acc. & Test acc.\\
%     \midrule                                    
%     \multicolumn{5}{c}{Fashion-MNIST}               \\
%     \midrule
%     Plain-10        & \textbf{99.54\%}         & 9.4M           & 1x  & N/A\\
%     Plain-20        & \textbf{99.54\%}         & 9.4M           & 1x  & N/A\\
%     \midrule                                    
%     \multicolumn{5}{c}{CIFAR-10}               \\
%     \midrule
%     Plain-20       & \textbf{91.97\%}         & 20.2M           & 1x & N/A\\
%     ResNet-32     &        &        &        &        \\
%     ResNet-50     &        &        &        &        \\
%     \bottomrule
%   \end{tabular}
%   }
%   \label{tab:const}
% \end{table}

\section{Conclusion}
In conclusion, we have described a novel approach to compress CNNs by first training a continuous embedding on a representation of the architecture and then performing gradient descent to determine an optimal architecture for the given task. We also introduced a novel theoretical analysis of CNNs which we hope will inspire future work. We also demonstrate that our method performs well over a variety of computer vision datasets. Given that this is a novel direction of research, we note that there exist multiple future directions to go. Expanding the search space to include networks of greater complexity such as ResNets or DenseNets would be of practical interest. Analyzing the transfer learning properties of this approach via the reuse of the weights for other tasks would be of great practical use as well.

\bibliographystyle{icml2019}
\bibliography{icml2019}

\section{Appendix}
\label{sec:dagproof}
\subsection{Topological ordering}
\begin{lemma}
If G is a directed acyclic graph (DAG), then G has a topological ordering
\end{lemma}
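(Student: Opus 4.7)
The plan is to proceed by strong induction on the number of vertices $n = |V(G)|$, with the inductive step relying on the key structural lemma that every finite nonempty DAG contains at least one \emph{source}, i.e.\ a vertex with in-degree zero. The base case $n=1$ is immediate: the single vertex forms a trivial topological ordering. For the inductive step, assuming the result for all DAGs on fewer than $n$ vertices, I would pick a source $v$ of $G$, consider the subgraph $G' = G \setminus \{v\}$ (which is still a DAG, since removing a vertex cannot create a cycle), apply the inductive hypothesis to obtain a topological ordering of $G'$, and then prepend $v$ to that ordering. The resulting sequence is a valid topological ordering of $G$ because all edges incident to $v$ are outgoing (by choice of $v$ as a source) and the remaining edges already respect the ordering within $G'$.

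The main obstacle — and the only nontrivial ingredient — is establishing the existence of a source in any finite nonempty DAG. I would prove this as a preliminary lemma by contradiction: suppose every vertex of $G$ has in-degree at least one. Then, starting from an arbitrary vertex $v_0$, I can repeatedly choose an in-neighbor to produce an infinite sequence $v_0, v_1, v_2, \dots$ with $v_{i+1} \to v_i \in E(G)$. Since $|V(G)| = n$ is finite, by the pigeonhole principle some vertex must repeat among $v_0, \dots, v_n$, say $v_i = v_j$ with $i < j$. The edges $v_j \to v_{j-1} \to \cdots \to v_i = v_j$ then form a directed cycle, contradicting the assumption that $G$ is acyclic. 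Hence at least one vertex must have in-degree zero.

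Once this source lemma is in hand, the inductive argument assembles cleanly, and the inverse function $o^{-1}$ discussed in Section~\ref{sec:topological_ordering} is well-defined whenever the ordering is additionally unique (addressed separately in the uniqueness proof referenced as Section~\ref{sec:uniquetopo}). I expect the writeup to be short, with the pigeonhole/cycle-creation argument being the conceptual crux and the induction itself being essentially bookkeeping.
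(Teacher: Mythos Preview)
Your proposal is correct and follows essentially the same approach as the paper: induction on the number of vertices, removing a source, applying the inductive hypothesis, and prepending the source to the resulting ordering. The only difference is that you explicitly justify the existence of a source via a pigeonhole/cycle-construction argument, whereas the paper simply asserts ``select a node $v$ with no incoming edges'' without proof; your version is therefore slightly more complete but not a different route.
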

\begin{proof}

We prove this by induction on $n$, where $n = |G|$.

\textbf{Base case:} 

For $n = 1$, the statement is true since the topological ordering is $G$.

\textbf{Hypothesis: } 

Assume that there exists a topological ordering on all DAGs $G$ of size $k$. 

\textbf{Inductive step: } 

Given a DAG $G'$ with $k+1$ nodes, select a node $v$ with no incoming edges.

Then $G* = G' - \{v\}$ is a DAG since deleting $v$ cannot induce a cycle on $G'$. 

Since $|G*| = k$, the hypothesis implies that $G*$ has a topological ordering $T*$.

Since $v$ has no incoming edges, no partial order is violated by placing $v$ at the head of the topologically ordered sequence. 

Thus $T = [v, T*]$ is a valid topological ordering of $G'$ and the lemma is proven.
\end{proof}

\begin{lemma}
\label{sec:uniquetopo}
If a topological sort $l_{1:T} = [l_0...l_i...l_T]$ has the property that 
$\forall i, \exists \text{ edge}(i, i+1)$, then it is unique
\end{lemma}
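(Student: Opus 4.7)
The plan is to argue by contradiction: assume there exists a second topological ordering $l'_{1:T}$ that differs from $l_{1:T}$, and derive a contradiction using the hypothesis that every consecutive pair $(l_i, l_{i+1})$ is connected by an edge.

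First, I would observe that the chain of consecutive edges $l_0 \rightarrow l_1 \rightarrow \cdots \rightarrow l_T$ implies, by transitivity of reachability in a DAG, that there is a directed path from $l_i$ to $l_j$ for every pair of indices $i < j$. Thus, in any valid topological ordering of the underlying DAG, $l_i$ must appear before $l_j$ whenever $i < j$, because a topological ordering by definition respects all directed edges (and hence all directed paths).

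Next, I would make the contradiction precise. Suppose $l'_{1:T}$ is a topological ordering with $l' \neq l$. Since both sequences are permutations of the same set of nodes, there is a smallest position $k$ at which they disagree; let $l'_k = l_m$ for some $m \neq k$. By minimality of $k$, the node $l_k$ has not yet appeared in $l'_{1:k-1}$, so $l_k$ occupies some later position in $l'$. If $m > k$, then the directed path $l_k \rightarrow l_{k+1} \rightarrow \cdots \rightarrow l_m$ forces $l_k$ to precede $l_m$ in $l'$, contradicting $l'_k = l_m$. If $m < k$, then since the first $k-1$ positions of $l$ and $l'$ agree, $l_m$ has already appeared at position $m$ of $l'$, contradicting that $l'$ is a permutation.

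I do not expect a genuine obstacle: the argument is essentially the textbook fact that a DAG admits a unique topological sort if and only if it contains a Hamiltonian path, and the hypothesis on consecutive edges is exactly a Hamiltonian path along the given ordering. The only care needed is bookkeeping around which sequence we are indexing and ensuring the ``first disagreement'' argument is valid, which is handled by the minimality choice above.
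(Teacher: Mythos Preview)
Your argument is correct and close in spirit to the paper's, but the framing differs. The paper first observes that the consecutive-edge hypothesis makes $l_{1:T}$ a Hamiltonian path, and then proves that a DAG can have at most one Hamiltonian path: given two, the first position where they diverge yields nodes $x,y$ with a subpath $x\to y$ in one and $y\to x$ in the other, hence a cycle. Your version skips the Hamiltonian-path detour and works directly with the precedence constraints: the chain $l_0\to l_1\to\cdots\to l_T$ forces $l_i$ before $l_j$ for all $i<j$ in \emph{any} topological order, so a first disagreement immediately violates that total order. Your route is slightly more direct, since the paper leaves implicit the step from ``unique Hamiltonian path'' to ``unique topological ordering''; the paper's route, on the other hand, makes the acyclicity contradiction explicit. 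Either way the content is the same textbook fact you name.
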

\begin{proof}
It follows directly that the above property implies that there exists a Hamiltonian path on the graph since traversing the graph in the topologically sorted order satisfies each node being visited once. 
We can then show that the existence of a Hamiltonian path in a DAG implies unique ordering.

Suppose a DAG has two Hamiltonian paths, then let $x \in P_1, y \in P_2$ be the first nodes on the paths ($P_1, P_2$) that differ. 

This implies that there is a path from $x \rightarrow y$ that is a subpath of $P_1$ and a path from $y \rightarrow x$ that is a subpath of $P_2$. This implies a cycle in the graph, which is a contradiction.
\end{proof}

% \subsection{Random architecture generation}
\subsection{Visualization of 1D-CNN filters}
\begin{figure}[!tbh]
    \centering
    \includegraphics[scale=0.8]{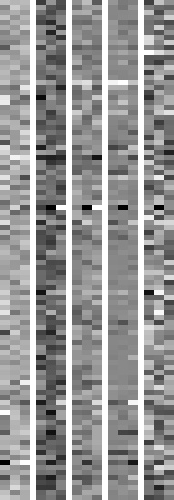}
    \hspace{10pt}
    \includegraphics[scale=0.8]{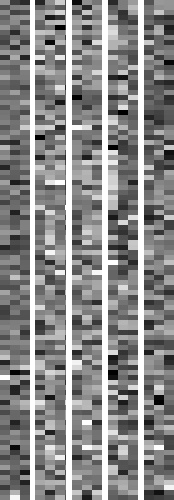}
    \caption{First 100 layer 1 filters for encoder (left) and decoder (right). Each column represents an input channel and rows represent filters.}
    \label{fig:filtersviz}
\end{figure}

\section{Details on 1-D CNN architecture}

Let $x_i \in \mathbb{Z}^{K \times T}$ be the ith activation or in the case of the first layer, the discrete representation of an architecture. Then a 1-d convolution operation applies a filter $w \in \mathbb{R}^{L \times K}$ to a window of $L$ features in the input vector. We then add a bias $b \in \mathbb{R}$ to this output, to produce a feature map $h_i$. Finally we apply a batch normalization function $g$, an activation function $\sigma$ and add the residual to produce a new activation $x_{i+1}$ as follows. Appropriate padding is added to the output in order to preserve the dimension and allow element-wise addition of the residual.
\begin{flalign*}
&h_i = w * x_i + b\\
&\hat{h_i} = g(h_i)\\
&x_{i+1} = \sigma(\hat{h_i}) + x_{i}
\end{flalign*}

In the case of a strided convolution where the temporal dimension of the output is smaller than the original input, we apply a convolution with the same stride $s$. The operations then become the following.
\begin{flalign*}
&h_i = f_s(x_i, w_1, b_1)\\
&\hat{h_i} = g(h_i)\\
&\hat{x_i} = f_s(x_i, w_2, b_2)\\
&x_{i+1} = \sigma(\hat{h_i}) + \hat{x_{i}}
\end{flalign*}
where $f_s(x, w, b)$ performs convolutions on input $x$ with stride $s$, weights $w$ and bias $b$.

% TODO: Note that we replace max pooling with strided convolutions
% TODO: Remove global average pooling here
%Finally, a global average pooling layer averages the output of the network to produce a single 256 dimensional vector that represents the continuous embedding of the architecture.

\section{Sampling of random architectures}
In section \ref{sec:experiments}, we describe a set of randomly generated architectures that are trained and then used to learn a continuous architecture search space. In this section, we detail how this set of architectures is generated. 

In order to generate a sensible set of architectures, we randomly sample architectures and then reject degenerate architectures. 

First we sample the length of the architecture randomly from a discrete unifrom distribution $l \sim U(2, 50)$. We set the lower bound to be 2 since we need at least one classification layer (fully connected) and one convolution layer for the architecture to be valid. Furthermore, we set the upperbound to 50 layers as an arbitrary limit that is within experimental constraints.

After this, we sample the 5 layer configuration variables for each of the $l$ layers randomly. 
Specifically, $t \sim U(1, 8)$ where:
\begin{enumerate}
    \item Convolution
    \item MaxPool
    \item BatchNorm
    \item ReLU
    \item Sigmoid
    \item Dropout
    \item AveragePool
    \item Linear
\end{enumerate} Additionally, $k \sim U(1, 10)$, $s \sim U(1, 10)$, $o \sim U(16, 4096)$, $p \sim U(1, 10)$. 

For layers that do not have certain configuration variables (e.g. batch normalization has no stride), we set the variable to be 0. For certain layers like MaxPool, ReLU or BatchNorm that do not change the number of output channels, we set the variable to the number of input channels. 
Furthermore, We also enforce that the number of input channels to the first is the same as the input of the task example and that the final layer is a fully connected layer with the number of outputs corresponding to the number of classes in the task. 

Lastly, we filter out degenerate architectures if they do not contain any convolutional layers, the output dimension is too small to proceed or the memory/computational footprint is too large to be practical.

\section{Analysis}
In order to analyze the behavior of the compressor, the following experiment was done by highly weighting the parameter reduction term of compression objective function relative to the accuracy term. The loss function used was:
$ L = 0.1 * L_a + L_p$.
Given the input network shown below, the compressor produced a compressed model consisting of many redudant layers that did not contribute substantially to the parameter count.

The input network was a standard convolutional network and the resulting output is shown below.

Input:

\begin{multicols*}{2}
\begin{Verbatim}
VGG(
  (features): Sequential(
    (0): Conv2d(3, 64, kernel_size=(3, 3), stride=(1, 1), padding=(1, 1))
    (1): BatchNorm2d(64, eps=1e-05, momentum=0.1, affine=True, track_running_stats=True)
    (2): ReLU(inplace)
    (3): Conv2d(64, 64, kernel_size=(3, 3), stride=(1, 1), padding=(1, 1))
    (4): BatchNorm2d(64, eps=1e-05, momentum=0.1, affine=True, track_running_stats=True)
    (5): ReLU(inplace)
    (6): MaxPool2d(kernel_size=2, stride=2, padding=0, dilation=1, ceil_mode=False)
    (7): Conv2d(64, 128, kernel_size=(3, 3), stride=(1, 1), padding=(1, 1))
    (8): BatchNorm2d(128, eps=1e-05, momentum=0.1, affine=True, track_running_stats=True)
    (9): ReLU(inplace)
    (10): Conv2d(128, 128, kernel_size=(3, 3), stride=(1, 1), padding=(1, 1))
    (11): BatchNorm2d(128, eps=1e-05, momentum=0.1, affine=True, track_running_stats=True)
    (12): ReLU(inplace)
    (13): MaxPool2d(kernel_size=2, stride=2, padding=0, dilation=1, ceil_mode=False)
    (14): Conv2d(128, 256, kernel_size=(3, 3), stride=(1, 1), padding=(1, 1))
    (15): BatchNorm2d(256, eps=1e-05, momentum=0.1, affine=True, track_running_stats=True)
    (16): ReLU(inplace)
    (17): Conv2d(256, 256, kernel_size=(3, 3), stride=(1, 1), padding=(1, 1))
    (18): BatchNorm2d(256, eps=1e-05, momentum=0.1, affine=True, track_running_stats=True)
    (19): ReLU(inplace)
    (20): Conv2d(256, 256, kernel_size=(3, 3), stride=(1, 1), padding=(1, 1))
    (21): BatchNorm2d(256, eps=1e-05, momentum=0.1, affine=True, track_running_stats=True)
    (22): ReLU(inplace)
    (23): MaxPool2d(kernel_size=2, stride=2, padding=0, dilation=1, ceil_mode=False)
    (24): Conv2d(256, 512, kernel_size=(3, 3), stride=(1, 1), padding=(1, 1))
    (25): BatchNorm2d(512, eps=1e-05, momentum=0.1, affine=True, track_running_stats=True)
    (26): ReLU(inplace)
    (27): Conv2d(512, 512, kernel_size=(3, 3), stride=(1, 1), padding=(1, 1))
    \end{Verbatim}
\end{multicols*}

\begin{multicols*}{2}
\begin{Verbatim}
    (28): BatchNorm2d(512, eps=1e-05, momentum=0.1, affine=True, track_running_stats=True)
    (29): ReLU(inplace)
    (30): Conv2d(512, 512, kernel_size=(3, 3), stride=(1, 1), padding=(1, 1))
    (31): BatchNorm2d(512, eps=1e-05, momentum=0.1, affine=True, track_running_stats=True)
    (32): ReLU(inplace)
    (33): MaxPool2d(kernel_size=2, stride=2, padding=0, dilation=1, ceil_mode=False)
    (34): Conv2d(512, 512, kernel_size=(3, 3), stride=(1, 1), padding=(1, 1))
    (35): BatchNorm2d(512, eps=1e-05, momentum=0.1, affine=True, track_running_stats=True)
    (36): ReLU(inplace)
    (37): Conv2d(512, 512, kernel_size=(3, 3), stride=(1, 1), padding=(1, 1))
    (38): BatchNorm2d(512, eps=1e-05, momentum=0.1, affine=True, track_running_stats=True)
    (39): ReLU(inplace)
    (40): MaxPool2d(kernel_size=2, stride=2, padding=0, dilation=1, ceil_mode=False)
  )
  (classifier): Sequential(
    (0): Linear(in_features=512, out_features=4096, bias=True)
    (1): ReLU(inplace)
    (2): Dropout(p=0.5)
    (3): Linear(in_features=4096, out_features=4096, bias=True)
    (4): ReLU(inplace)
    (5): Dropout(p=0.5)
    (6): Linear(in_features=4096, out_features=1000, bias=True)
  )
)
\end{Verbatim}
\end{multicols*}

Output:
\begin{multicols*}{2}
\begin{Verbatim}
Model(
  (features): Sequential(
    (0): ReLU()
    (1): BatchNorm2d(3, eps=1e-05, momentum=0.1, affine=True, track_running_stats=True)
    (2): ReLU()
    (3): ReLU()
    (4): Dropout(p=0)
    (5): MaxPool2d(kernel_size=1, stride=1, padding=0, dilation=1, ceil_mode=False)
    (6): ReLU()
    (7): Conv2d(3, 15, kernel_size=(3, 3), stride=(2, 2), padding=(1, 1))
    (8): ReLU()
    (9): ReLU()
    (10): ReLU()
    (11): ReLU()
    (12): Conv2d(15, 135, kernel_size=(2, 2), stride=(3, 3), padding=(1, 1))
    (13): ReLU()
    (14): ReLU()
    (15): BatchNorm2d(135, eps=1e-05, momentum=0.1, affine=True, track_running_stats=True)
    (16): BatchNorm2d(135, eps=1e-05, momentum=0.1, affine=True, track_running_stats=True)
    (17): BatchNorm2d(135, eps=1e-05, momentum=0.1, affine=True, track_running_stats=True)
    (18): BatchNorm2d(135, eps=1e-05, momentum=0.1, affine=True, track_running_stats=True)
    (19): ReLU()
    (20): ReLU()
    (21): BatchNorm2d(135, eps=1e-05, momentum=0.1, affine=True, track_running_stats=True)
    (22): ReLU()
    (23): BatchNorm2d(135, eps=1e-05, momentum=0.1, affine=True, track_running_stats=True)
    (24): BatchNorm2d(135, eps=1e-05, momentum=0.1, affine=True, track_running_stats=True)
    (25): BatchNorm2d(135, eps=1e-05, momentum=0.1, affine=True, track_running_stats=True)
    (26): BatchNorm2d(135, eps=1e-05, momentum=0.1, affine=True, track_running_stats=True)
    (27): BatchNorm2d(135, eps=1e-05, momentum=0.1, affine=True, track_running_stats=True)
    (28): BatchNorm2d(135, eps=1e-05, momentum=0.1, affine=True, track_running_stats=True)
    (29): ReLU()
    (30): BatchNorm2d(135, eps=1e-05, momentum=0.1, affine=True, track_running_stats=True)
    (31): BatchNorm2d(135, eps=1e-05, momentum=0.1, affine=True, track_running_stats=True)
    (32): ReLU()
    (33): BatchNorm2d(135, eps=1e-05, momentum=0.1, affine=True, track_running_stats=True)
    (34): BatchNorm2d(135, eps=1e-05, momentum=0.1, affine=True, track_running_stats=True)
    (35): ReLU()
    (36): Dropout(p=0)
    (37): ReLU()
    (38): ReLU()
    (39): Dropout(p=0)
    (40): MaxPool2d(kernel_size=1, stride=1, padding=0, dilation=1, ceil_mode=False)
  )
  (classifier): Sequential(
    (0): Linear(in_features=4860, out_features=10, bias=True)
  )
)
\end{Verbatim}
\end{multicols*}
\end{document}